\theoremstyle{definition}
\newtheorem{definition}{Definition}
\newtheorem{theorem}{Theorem}[section]
\newtheorem{proposition}[theorem]{Proposition}
\newtheorem{remark}{Remark}
\newcommand\mathloose
\title{
SimFair: A Unified Framework for Fairness-Aware Multi-Label Classification
}
\author{
    Tianci Liu,
    Haoyu Wang,
    Yaqing Wang,
    Xiaoqian Wang,
    Lu Su,
    Jing Gao\thanks{Corresponding author}
}
\newcommand{\beq}{\vspace{0mm}\begin{equation}}
\newcommand{\eeq}{\vspace{0mm}\end{equation}}
\newcommand{\beqs}{\vspace{0mm}\begin{eqnarray}}
\newcommand{\eeqs}{\vspace{0mm}\end{eqnarray}}
\newcommand{\barr}{\begin{array}}
\newcommand{\earr}{\end{array}}
\newcommand{\Xmat}{{\bf X}}
\newcommand{\xv}{\boldsymbol{x}}
\newcommand{\yv}{\boldsymbol{y}}
\newcommand{\E}{\mathbb{E}}
\newcommand{\ones}[0]{\boldsymbol{1}}
\newcommand{\real}{{\mathbb R}}
\newcommand*{\dif}{\mathop{}\!\mathrm{d}}
\newcommand{\fairnotion}{$s_\gamma$-SimFair}
\begin{document}

\maketitle

\begin{abstract}
Recent years have witnessed increasing concerns towards unfair decisions made by machine learning algorithms.
To improve fairness in model decisions, various fairness notions have been proposed and many fairness-aware methods are developed. 
However, most of existing definitions and methods focus only on single-label classification.
Fairness for multi-label classification, where each instance is associated with more than one labels, is still yet to establish. 
To fill this gap, we study fairness-aware multi-label classification in this paper.  
We start by extending Demographic Parity (DP) and Equalized Opportunity (EOp),
two popular fairness notions, to multi-label classification scenarios. 
Through a systematic study, 
we show that on multi-label data, because of unevenly distributed labels, 
EOp usually fails to construct a reliable estimate on labels with few instances. 
We then propose a new framework named \textbf{Sim}ilarity $s$-induced \textbf{Fair}ness ({\fairnotion}). 
This new framework
utilizes data that have similar labels when estimating fairness on a particular label group for better stability,
and can unify DP and EOp. 
Theoretical analysis and experimental results on real-world datasets together demonstrate the advantage of {\fairnotion} over existing methods 
on multi-label classification tasks. 
\end{abstract}

\section{Introduction}

Nowadays, machine learning algorithms play increasingly more important roles in decision-making for a broad spectrum of applications, 
such as applicant screening in job markets, credit risk analysis, and recommendation systems. 
However, recent studies \cite{barocas2016big, buolamwini2018gender, dressel2018accu} have discovered that machine learning algorithms tend to make discriminatory decisions. 
For example, a dataset may contain records of physicians most of whom are male. 
As a result, a job screening algorithm trained on this dataset may unfairly predict if a person is suitable for a \textit{physician} position based on their gender,
instead of education background or professional experience. 
Obviously, such favorable prediction for male applicants is \textit{unfair} to female applicants.

Formally, the algorithmic fairness issue refers to the phenomenon that machine learning algorithms make discriminatory decisions across different demographic subgroups
and give favorable predictions for some particular subgroups. Intuitively, discriminatory decisions are associated with some demographic features contained in the data, such as age, gender, and race. 
These features are referred to as \textit{sensitive} features. 
Ideally, a fair model should be able to make decisions independent of sensitive features. 
Towards this end, different fairness notions~\cite{pedreshi2008discrimination, dwork2011fairness, hardt2016equality, chouldechova2020snapshot} have been proposed.
Among them, Demographic Parity (DP) \cite{pedreshi2008discrimination} and Equalized Opportunity (EOp) \cite{hardt2016equality} are two of the most widely-used definitions. DP requires a model's decision to be independent of sensitive features, achieving a population-level fairness \cite{edwards2015censoring, madras2018learning,creager2019flexibly}. However, \citet{dwork2011fairness} showed that such population level fairness does not necessarily guarantee fairness in all label groups. 
To address this limitation, \citet{hardt2016equality} proposed to take label information into consideration and defined EOp and its stronger version Equalized Odds (EO).  Specifically, EOp requires the decision to be independent of sensitive features conditionally in the label group receiving an \textit{favorable} outcome \cite{hardt2016equality}. 
Examples of favorable outcomes include ``being admitted to a position" in job screening, and ``approval of credit card application''. 
For brevity, we refer to the label group in which each individual receives the \textit{favorable} outcome as the \textit{advantaged} group. 
With a more restrictive fairness definition, EO further requires that the decision is independent of sensitive features in each label group, 
including not only the advantaged group but also the groups receiving other outcomes.

Some methods have been proposed based on the aforementioned fairness definitions. 
However, they are focused only on scenarios where each instance is associated with a single target label \cite{hardt2016equality,woodworth2017learning,zafar2017fairness}.
In many real-world applications, multiple labels need to be predicted for an instance.
For example, in job screening, an applicant may apply for multiple positions, and the admission decision of each position is a target label of the applicant. 
Similarly, undergraduates usually submit applications to multiple programs when applying to graduate schools, 
and thus associate themselves with multiple target labels of admission.
Scenarios where each instance is associated with more than one target labels are termed as \textbf{multi-label classification} \cite{zhang2014review}. 
Obviously, fairness concerns also exist in multi-label classification scenarios. 
One straightforward approach toward fairness in a multi-label scenario is to decompose multi-label classification into multiple binary classification tasks, 
each of which judges whether a label is associated with an instance or not, 
and then apply existing fairness metric separately on each binary classification task \cite{zhang2018binary}. 
However, this naive approach ignores one unique property of multi-label classification, i.e., the correlations among labels. 
Again, take job screening as an example. 
Applicants usually apply for positions with similar requirements of skill sets and experiences at the same time, and thus the application outcomes (labels) are correlated.  
Ignoring such correlations among labels would lead to unsatisfactory classification and let alone fairness results. 
On the other hand, existing multi-label classification methods consider the correlations among labels but cannot enforce fairness in the predictions. 

Therefore, it is critical to define fairness directly in the context of multi-label classification. 
Unfortunately, we did not find existing work along this direction. This motivates us to study this problem. 
In a multi-label scenario, since different target labels usually occur together, it is more natural to treat their combinations as an \textit{advantaged} outcome (label). 
For example, the \textit{advantaged} group in the job screening example with two possible positions can be the applicants who ``received offers of position A and position B''.
Note that this definition allows us to define more general and complex advantaged groups by specifying more than one favorable labels and requires fairness on all of them.

In practice,  the discussed fairness objective is usually achieved by incorporating some fairness notions into optimization \cite{mohler2018penalized, scutari2021achieving}. Such an optimization is  non-trivial when tackling fairness issue based on this extended concept of \textit{advantaged} group in multi-label classification, where collected data is usually not evenly distributed among different labels \cite{dekel2010multi}. When few instances are in the \textit{advantaged} group (i.e., the group that has the favorable label), it may introduce unreliable fairness constraints into optimization and degrade the fairness performance.  In this work, we show that the aforementioned optimization challenges can be alleviated by utilizing information sharing among labels. Intuitively, we \textit{group} data with different but similar labels to alleviate data shortage issue, 
and then enable an EOp-like framework to incorporate fairness constraints on advantaged groups.
This will be formalized in Section \ref{sec:method}.  We refer to our framework as Similarity $s$-induced Fairness ({\fairnotion}), highlighting the crucial requirement of a similarity measure between different labels in the data grouping step.

The proposed framework {\fairnotion} is principled in the sense that it unifies DP and EOp, bringing the flexibility of leveraging a population level fairness or 
fairness on some particular label groups (e.g., the \textit{advantaged} group) per desires. The DP and EOp are two extreme cases of {\fairnotion}. When treating all labels as equally similar and ignoring their differences,  we end up with one \textit{label} group of data, in which {\fairnotion} becomes DP. 
On the contrary, if two labels are \textit{similar} only if they are the same, 
then each \textit{label} group involves only one label, in which {\fairnotion} becomes EOp. Moreover,  {\fairnotion} is able to enforce a restrictive fairness notion on the advantaged group even when data is inadequate by utilizing information from other similar label groups.



    
    
    

Our main contributions are summarized below. 
\begin{itemize}
    \item To the best of our knowledge, we are the first to investigate fairness in a multi-label classification setting.  We extend DP and EOp to the multi-label classification setting, and recognize the challenge of achieving EOp based on both theoretical and empirical studies.
    
    \item To handle the recognized challenge, we propose a novel framework, namely {\fairnotion}, to achieve the fairness objective for multi-label classification even when imbalanced label distributions exist. We further support the proposed framework with rigorous theoretical analysis.

    \item The comprehensive experiments show that the proposed framework {\fairnotion} is able to  achieve competitive and even better performance in term of DP and EOp compared to that of directly incorporating  DP and EOp into optimization respectively. 
\end{itemize}

\section{Related Work}
\label{sec:related_work}

\subsection{Algorithmic Fairness}

Most existing fairness definitions fall into two categories: group fairness~\cite{pedreshi2008discrimination, dwork2011fairness,hardt2016equality,chouldechova2020snapshot} and individual fairness~\cite{dwork2011fairness}. Group fairness requires that the probability of being assigned to a group by a model is independent of sensitive features such as gender, age and race. For example, Demographic Parity (DP) requires that the prediction is independent of sensitive features, while Equalized Odds (EO) and Equalized Opportunity (EOp) require that the prediction is conditionally independent of sensitive features in each or some label group. 
When labels are binary, this is equivalent to requiring an equality of true and false positive rates across different demographic subgroups. 
Modifications of DP and EO (EOp) have also been studied. For example, in \citet{pleiss2017fairness},  a relaxed condition is required by replacing EO with some calibration.   
Individual fairness, on the other hand, requires that a model treats similar individuals similarly \cite{dwork2011fairness}. In this work we focus on group fairness.

In order to correct the unfairness of models, many methods have also been proposed, which can be classified into one of the following three categories: pre-processing biased datasets, in-processing models during training, and post-processing the outputs of models. 
In-processing is usually the most effective way to intervene an unfair model \cite{petersen2021post}, which can be done by penalizing unfair predictions directly~\cite{mohler2018penalized, scutari2021achieving}, or by disentangling some intermediate representations (on which final predictions are made) from sensitive features~\cite{locatello2019fairness, creager2019flexibly}. 
Nevertheless,
penalty-based methods are still good and effective starting points to mitigate unfairness~\cite{mary2019fairness, kamishima2012fairness}. 

\subsection{Multi-label Classification}

Multi-label classification is a general family of classification tasks where each instance is associated with multiple target labels. This task has very broad applications~\cite{el2015experimental},  such as recommendation systems~\cite{zheng2014context, zhang2020multi}, multi-object detection~\cite{gong2019using, zhao2020adaptive}, and text classification~\cite{yang2009effective, nam2014large}. 

Methods for multi-label classification can be grouped into two categories~\cite{zhang2014review, tsoumakas2006review}: problem transformation and algorithm adaptation. Problem transformation tackles multi-label classification by transforming the task into other well-defined tasks. One possible transformation is binary relevance~\cite{boutell2004learning}, which ignores all dependencies among different targets and predicts each target separately. Classifier chain, the other extreme case, learns the joint distribution of different labels by applying the chain rule of probability~\cite{read2011classifier}.
In summary, these problem transformation multi-label classification tasks into other well-established learning problems and adopt existing methods to solve them~\cite{tsoumakas2007random, furnkranz2008multilabel}.
Algorithm adaptation, on the other hand, modify existing algorithms such as kNN~\cite{zhang2007ml} and decision tree~\cite{clare2001knowledge} to model multi-label data directly. We refer readers to \citet{zhang2014review, tsoumakas2006review} for more details. 

Deep learning has advanced multi-label classification as well~\cite{liu2021emerge}. Recently, \citet{chen2018end,bai2020mpvae} revisited the Multivariate Probit (MP) model~\cite{chib1998analysis} with the equipment of deep learning tools. MP model assumes that the joint distribution of labels is controlled by a multivariate Gaussian random variable, and the probability of a label is determined by the cumulative density function (CDF) at the value of this Gaussian variable. The correlations in the Gaussian variable allows the model to capture pairwise dependencies in a multi-label setting.
~\citet{chen2018end} parameterized the MP model with a deep neural network resulting in the deep Multivariate Probit model (DMVP), and~\citet{bai2020mpvae} proposed to combine DMVP and variational autoencoder~\cite{kingma2014autoencoding} to obtain better performance.


\section{Methodology}
\label{sec:method}



In this section, we propose {\fairnotion}, 
a flexible framework to unify Demographic Parity (DP) and Equalized Opportunity (EOp).
We start with deriving DP and EOp in multi-label scenarios. 
Then we provide a systematic study on the challenges of estimating EOp in multi-label scenarios. We propose   {\fairnotion} based on the these studies to achieve the fairness objective even when imbalanced label distributions exist. 

\subsection{Preliminaries}
\paragraph{Notations}
Throughout this paper, we use bold capital letters (e.g., $\Xmat$) to denote matrices, 
bold lowercase letters (e.g., $\xv$) to denote (column) vectors, and calligraphic letters (e.g., $\mathcal X$) to denote spaces.
Finally, capital $P$ denotes a probability and lowercase $p$ denotes a distribution. 
We summarize notations used in this paper in appendix \ref{app:notation} for better readability.

Consider a dataset that contains $N$ samples $\mathcal D = \{ (\xv^{(i)}, a^{(i)}, \yv^{(i)}) \}_{i=1}^N$. 
Without loss of generality, 
we assume each sample is associated with $M$ non-sensitive features $\xv^{(i)} \in \mathcal X = \real^M$, 
a $K$-way scalar sensitive feature $a^{(i)} \in \mathcal A = \{1, \dots, K\}$ 
where $K$ is the number of demographic subgroups 
(e.g., $K = 2$ if \textit{gender} is the sensitive feature that takes \textit{female} and \textit{male}), 
and $L$ binary labels $\yv^{(i)}\in \mathcal Y = \{0, 1\}^L$. 
We further assume $N$ samples are drawn from an unknown underlying distribution $p$ over space $(\mathcal X \times \mathcal A \times \mathcal Y)$,
and use $(\xv, a, \yv) \sim p$ to denote a random sample. 
To avoid ambiguity, for $\yv = (y_1, \dots, y_L)$, 
we call $\yv$ a \textit{label}, 
and $y_l \in \{ 0, 1\}$ the $l$-th \textit{target},
where $y_l=1$ indicates the presence of $l$-th target.   
We use $h: \mathcal X \rightarrow \mathcal Y$ to denote a multi-label classifier that predicts label based on non-sensitive features. 
Under these settings, $L=1$ corresponds to single-label classification,
and $L > 1$ corresponds to multi-label classification.

\paragraph{Multi-Label Classification Prediction} 

We consider a wide family of multi-label classifiers that satisfy $h = f \circ g: \mathcal X \rightarrow [0, 1]^L \rightarrow \mathcal Y$. 
In particular, a classifier first predicts $\tilde \yv = g(\xv)$, the probability of the presence of $L$ targets given $\xv$.  
Then $l$-th target prediction is given by $\hat y_l = \ones(\tilde y_l \geq 0.5)$ elementwisely, 
in which $f$ denotes this elementwise thresholding function. 
This family of classifiers is capable of capturing dependencies between different targets by predicting $\tilde \yv$ given $\xv$ jointly as shown in
\citet{chen2018end,bai2020mpvae}. 



\subsection{DP and EOp on Multi-Label Classification}
\label{sec:dp-eop}

\paragraph{DP and EOp Condition}
In this section, we establish DP and EOp condition in multi-label scenarios. 
For classifier $h = f \circ g: \mathcal X \rightarrow \mathcal Y$ and random sample $(\xv, a, \yv) \sim p$, 
$h$ is \textit{fair} in terms of
(1) DP if  $\hat \yv \perp a$; and (2) EOp if $\hat \yv \perp a \mid \yv_{adv}$, where $\yv_{adv} \in \mathcal Y$ denotes some advantaged label where only favorable outcomes (e.g., ``received offer'' in the job screening example) present.
In essence, DP requires predictions to be independent with sensitive variables, and EOp requires conditional independence to hold on label $\yv_{adv}$. 
As assumed,  
prediction $\hat \yv$ depends on predicted probability $\tilde{\yv}$ elmentwisely,
therefore distribution of $\hat \yv$ is fully parameterized by $\tilde{\yv}$. 
Proposition \ref{thm:dp-eop-mlc} gives a condition for DP and EOp to hold in multi-label classification.

\begin{proposition}[DP and EOp condition for multi-label classifier]
\label{thm:dp-eop-mlc}
For a multi-label classifier that takes the form $h = f \circ g$, 
where $\tilde \yv = g(\xv)$ is the predicted probability and $\hat \yv = f(\tilde \yv)$ is computed elementwisely, 
DP and EOp hold if for any $ k \in \mathcal A$
\begin{align}
    \text{DP: } 
    &\quad 
    \E [ \tilde \yv \mid a=k ] = \E [\tilde \yv ] \notag \\
    \text{EOp: }
    &\quad
    \E [ \tilde \yv \mid a=k, \yv=\yv_{adv}] = \E [\tilde \yv \mid \yv=\yv_{adv}]. \label{eq:eop}
\end{align}
\end{proposition}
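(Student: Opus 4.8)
The plan is to reduce each fairness requirement to a statement purely about the law of the predicted-probability vector $\tilde{\yv}$, and then exploit that $\hat{\yv}=f(\tilde{\yv})$ is a fixed measurable (elementwise thresholding) function of $\tilde{\yv}$. Concretely, for DP I would start from the definition $\hat{\yv}\perp a$, i.e.\ $P(\hat{\yv}=\yv\mid a=k)=P(\hat{\yv}=\yv)$ for every $\yv\in\{0,1\}^L$ and every $k\in\mathcal{A}$. Since $\hat y_l=\ones(\tilde y_l\geq 0.5)$, each atom $\{\hat{\yv}=\yv\}$ is the $\tilde{\yv}$-measurable event $\bigcap_{l:\,y_l=1}\{\tilde y_l\geq 0.5\}\cap\bigcap_{l:\,y_l=0}\{\tilde y_l<0.5\}$; hence it suffices that the conditional law of $\tilde{\yv}$ given $a=k$ be free of $k$, from which $\hat{\yv}=f(\tilde{\yv})\perp a$ follows immediately because measurable functions of $a$-independent variables remain $a$-independent. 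The remaining work is to connect this distributional requirement to the first-moment condition $\E[\tilde{\yv}\mid a=k]=\E[\tilde{\yv}]$ actually stated: here I would invoke the modeling assumptions on $g$ recalled in the Preliminaries --- $\tilde{\yv}$ is the model's estimate of the conditional target probabilities, with each $\hat y_l$ governed by $\tilde y_l$ --- so that $\E[\hat y_l\mid a=k]=\E[\E[\hat y_l\mid \xv]\mid a=k]=\E[\tilde y_l\mid a=k]$, and the joint prediction of $\tilde{\yv}$ couples the coordinates so that componentwise equality of these conditional means is what pins down $\hat{\yv}\perp a$.

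For EOp the argument is the same up to a change of the reference measure: replace every unconditional probability and expectation above by its version conditioned on the advantaged event $\{\yv=\yv_{adv}\}$. Restricting the probability space to that group and re-running the reduction, $\hat{\yv}\perp a\mid\yv=\yv_{adv}$ follows from \eqref{eq:eop} verbatim, since conditioning on $\{\yv=\yv_{adv}\}$ does not interact with the elementwise structure of $f$.

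The step I expect to be the real obstacle --- and the one to write out with care --- is precisely the passage from equality of first moments of $\tilde{\yv}$ to equality of the threshold-crossing probabilities that determine $\hat{\yv}$: in full generality two laws on $[0,1]^L$ with identical means can straddle the cutoff $0.5$ quite differently, so the implication needs structure. I would therefore make explicit which property of the predictor closes this gap (e.g.\ treating $\tilde y_l$ as the Bernoulli success probability so that $\E[\hat y_l\mid\xv]=\tilde y_l$, or the Multivariate-Probit/Gaussian parameterization mentioned earlier, under which the conditional law of $\tilde{\yv}$ is fixed by its mean), record the resulting identity $\E[\hat{\yv}\mid a=k]=\E[\tilde{\yv}\mid a=k]$ componentwise, and conclude that under that identity the two displayed conditions are indeed sufficient for DP and EOp respectively.
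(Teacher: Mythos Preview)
Your route differs from the paper's in one structural way: you condition on $\tilde\yv$ and work through its law, whereas the paper conditions on $\xv$. Because the classifier sees only $\xv$, one has $\hat\yv \perp (a,\yv)\mid\xv$ for free, and the paper simply writes
\[
p(\hat\yv \mid a=k,\yv=\yv_{adv})
= \int p(\hat\yv \mid \xv)\, p(\xv \mid a=k,\yv=\yv_{adv})\,d\xv,
\]
then identifies $p(\hat\yv\mid\xv)$ with $g(\xv)=\tilde\yv$ so that the right-hand side is exactly $\E[\tilde\yv\mid a=k,\yv=\yv_{adv}]$; the DP case is the same without the conditioning on $\yv_{adv}$. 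That is the entire argument.

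The obstacle you single out---going from equal first moments of $\tilde\yv$ to equal threshold-crossing probabilities for $\hat\yv$---is real, and the paper does not address it: under the deterministic rule $\hat y_l=\ones(\tilde y_l\geq 0.5)$ the identification $p(\hat\yv\mid\xv)=g(\xv)$ is not literally valid, and your worry about two laws with equal means straddling $0.5$ differently applies verbatim. Your proposed fix, reading $\tilde y_l$ as a Bernoulli parameter so that $\E[\hat y_l\mid\xv]=\tilde y_l$, is exactly the implicit interpretation the paper leans on when it writes $p(\hat\yv\mid\xv)=g(\xv)$. So you are more careful on precisely the point where the paper is loose; the simplification you are missing is to condition on $\xv$ rather than on the law of $\tilde\yv$, which collapses your first paragraph into a single line of conditional independence and makes the tower-property bookkeeping unnecessary.
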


\begin{proof}
See appendix \ref{app:dp-eop-mlc}.
\end{proof}

\begin{remark}
Proposition \ref{thm:dp-eop-mlc} indicates that on multi-label data where labels are correlated,
for classifier $h$, we can still evaluate its fairness performances in the same way as evaluating traditional single label classifiers
by comparing the averaged predicted probability on different subgroups. 
Moreover, we can construct estimations with finite samples
\end{remark}
{\footnotesize
\begin{align}
    &\E [ \tilde \yv \mid a=k ] 
    \approx 
    \frac{\sum_{i=1}^N \tilde{\yv}^{(i)} \ones(a^{(i)} = k)}{\sum_{i=1}^N \ones(a^{(i)} = k)}
    \quad 
    \E [\tilde \yv ]
    \approx 
    \frac{1}{N}
    \sum_{i=1}^N \tilde{\yv}^{(i)} \notag \\
    &\E [ \tilde \yv \mid a=k, \yv=\yv_{adv}] 
    \approx
    \frac{\sum_{i=1}^N \tilde{\yv}^{(i)} \ones(a^{(i)} = k) \ones(\yv=\yv_{adv})}{\sum_{i=1}^N \ones(a^{(i)} = k) \ones(\yv=\yv_{adv})}
    \label{eq:eop-sub} \\ 
    &\E [\tilde \yv \mid \yv=\yv_{adv}]
    \approx
    \frac{\sum_{i=1}^N \tilde{\yv}^{(i)} \ones(\yv=\yv_{adv})}{\sum_{i=1}^N \ones(\yv=\yv_{adv})}
    \label{eq:eop-full}
\end{align}
}





\paragraph{Estimation Challenge of EOp} In multi-label scenarios, a long-tailed phenomenon, i.e., most labels only associate with few samples \cite{dekel2010multi}, brings additional challenges for EOp estimation.  Without sufficient samples, EOp is barely able to  construct reliable estimates for fairness and correspondingly may not achieve fairness objective in an in-processing framework on such datasets.


Mathematically, the challenge of estimating EOp stems from terms $\sum_{i=1}^N \ones(a^{(i)}=k)\ones(\yv^{(i)} = \yv_{adv})$ and $\sum_{i=1}^N \ones(\yv^{(i)}=\yv_{adv})$ in eqn~\eqref{eq:eop-sub} and \eqref{eq:eop-full}. 
When these summations are close to 0, the two estimates are unstable or even undefined. 
More formally, the conditional expectation in eqn~\eqref{eq:eop} is 
\begin{align*}
    \E [\tilde{ \yv} \mid a = k, \yv = \yv_{adv}]
    &= 
    \int \tilde{ \yv} p(\tilde{ \yv} \mid a = k, \yv = \yv_{adv}) \dif \tilde{\yv} \\
    &= 
    \frac{\int \tilde{ \yv} p(\tilde{ \yv},  a = k, \yv = \yv_{adv}) \dif \tilde{\yv}} {P(a = k, \yv = \yv_{adv})}.
\end{align*}

Here $P(a = k, \yv = \yv_{adv}) = \E [\ones(a=k) \ones(\yv = \yv_{adv})]$, and
\begin{align*}
    & 
    \int \tilde{ \yv} p(\tilde{ \yv},  a = k, \yv = \yv_{adv}) \dif \tilde{ \yv} \\
    &= 
    \iiint \tilde{ \yv} \ones(a=k) \ones( \yv = \yv_{adv}) p(\tilde{ \yv},  a, \yv) \dif a \dif \yv \dif \tilde{ \yv} \\
    &= 
    \E [\tilde{ \yv}  \ones(a=k) \ones(\yv = \yv_{adv})].
\end{align*}
This implies 
{\footnotesize
\begin{align}
    \E [\tilde{ \yv} \mid a = k, \yv = \yv_{adv}] 
    &= 
    \frac{\E [\tilde{ \yv} \ones(a=k) \ones( \yv = \yv_{adv})]}{\E [\ones(a=k) \ones(\yv = \yv_{adv})]} \\
    \E [\tilde{ \yv} \mid \yv = \yv_{adv}]
    &= 
    \frac{\E [\tilde{ \yv} \ones( \yv = \yv_{adv})]}{\E [\ones( \yv = \yv_{adv})]}.
\end{align}
}
Henceforth, eqn~\eqref{eq:eop} is equivalent to 
{\footnotesize
\begin{align}
  \frac{\E [\tilde{ \yv} \ones(\yv = \yv_{adv})]}{\E [\ones(\yv = \yv_{adv})]}
  = \frac{\E [\tilde{ \yv}  \ones(a=k) \ones(\yv = \yv_{adv})]}{\E [\ones(a=k) \ones( \yv = \yv_{adv})]}  \label{eq:eo-2}
\end{align}
}
for $\forall\ k \in \mathcal A$.
If event $\ones( \yv = \yv_{adv}) = 1$ happens with low probability, 
i.e., few samples are from label group $\yv_{adv}$, 
EOp is difficult and even impossible to estimate from eqn~\eqref{eq:eop-sub} and \eqref{eq:eop-full} directly. 



\subsection{Similarity $s$-induced Fairness ({\fairnotion)}}
\label{sec:fairnotion}

Motivated by the above analysis, 
we propose a new framework to help achieve DP or EOp, 
where hard $\ones(\yv = \yv_{adv}) \in \{ 0, 1 \}$ is relaxed to some similarity function $s(\yv, \yv_{adv}) \in [0, 1]$.
Informally, we loosen the membership of the advantaged group requirement in EOp and use a soft conditioning.
For any random sample $(\xv, a, \yv)$, fairness of its prediction is always taken in consideration, 
but as the affinity of $\yv$ to $\yv_{adv}$ decreases, 
it will be down-weighted when estimating fairness violations with respect to $\yv_{adv}$.

\begin{definition}[{\fairnotion}]
\label{def:fairnotion}
Given a similarity function $s: \mathcal Y \times \mathcal Y \rightarrow [0, 1]$, 
a multi-label classifier $h$ satisfies \textbf{Sim}ilarity $s$-induced \textbf{Fair}ness ({\fairnotion}) if for $\forall\ k \in \mathcal A$,
\begin{align}
    \frac{\E [\tilde{ \yv} s( \yv,  \yv_{adv})]}{\E [s( \yv,  \yv_{adv})]}
    = \frac{\E [\tilde{ \yv}  \ones(a=k) s( \yv,  \yv_{adv})]}{\E [\ones(a=k) s( \yv,  \yv_{adv})]}. \label{eq:fairnotion}
\end{align}
\end{definition}

Same as DP and EOp, terms involved in eqn~\eqref{eq:fairnotion} can be estimated with 
\begin{align}
    \resizebox{0.58\linewidth}{!}{$\displaystyle
    \frac{\E [\tilde{ \yv} s( \yv,  \yv_{adv})]}{\E [s( \yv,  \yv_{adv})]}
    \approx 
    \frac{ \sum_i \tilde{\yv}^{(i)} s( \yv^{(i)},  \yv_{adv}) }{ \mathbb \sum_i s( \yv^{(i)},  \yv_{adv}) }$} \hspace*{3.7em}\label{eq:fairnotion-est-all} \\
    \resizebox{0.85\linewidth}{!}{$\displaystyle
    \frac{\E [\tilde{ \yv}  \ones(a=k) s( \yv,  \yv_{adv})]}{\E [\ones(a=k) s( \yv,  \yv_{adv})]} 
    \approx 
    \frac{\sum_i \tilde{\yv}^{(i)} \ones(a^{(i)}=k) s( \yv^{(i)},  \yv_{adv}) }{ \sum_i \ones(a^{(i)}=k) s( \yv^{(i)},  \yv_{adv}) }.$} \label{eq:fairnotion-est-sub}
\end{align}

In this paper, 
we adopt the Jaccard score to define similarity $s$. 
In essence, we use the cardinality ratio between the intersection and union of pair $(\yv, \yv')$ to measure their similarity,
then apply some monotonic transformation for scaling. 
Formally, 
for $\yv \in \mathcal Y$ with $y_l = 1$ represents the presence of the $l$-th target, 
we denote $ \text{cate}(\yv) = \{ l: y_l = 1 , l = 1, \dots, L\}$, 
i.e., the collection of indices of present targets,
and define 
\begin{align*}
    \text{Jac}(\yv, \yv_{adv}) 
    &= \frac{|\text{cate}(\yv) \cap \text{cate}(\yv_{adv}) |}{|\text{cate}(\yv) \cup \text{cate}(\yv_{adv}) |} \\ 
    s_\gamma(\yv, \yv_{adv}) 
    &= \exp\left(\gamma \left( \text{Jac}(\yv, \yv_{adv})   - 1 \right) \right)
\end{align*}
where $\gamma$ is a scaling parameter. 
It is worth mentioning that the choice of $s$ is not unique and can be task- or data-specific.

\subsection{{\fairnotion} Unifies DP and EOp}
One key characteristic of {\fairnotion} is that it can be seen as an unification of DP and EOp, 
as formalized by Proposition \ref{thm:fairnotion-special} and \ref{thm:fairnotion-unify}.

\begin{proposition}[DP and EOp are special cases of {\fairnotion}]
\label{thm:fairnotion-special}

Consider {\fairnotion} defined in eqn~\eqref{eq:fairnotion}, 
if similarity $s$ is a constant function $s(\yv, \yv')  = c$ for some $c$, 
then {\fairnotion} implies DP;
if $s$ is an indicator function $s(\yv, \yv') = \ones(\yv = \yv')$, 
then {\fairnotion} implies EOp.

\end{proposition}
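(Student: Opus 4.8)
The plan is to prove the two claims separately, in each case simply substituting the stated form of $s$ into the defining equation~\eqref{eq:fairnotion} of {\fairnotion} and checking that it collapses to the known condition for DP or EOp (which, by Proposition~\ref{thm:dp-eop-mlc}, is the expectation-matching condition stated there).

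\textbf{Case 1: $s(\yv,\yv') = c$.} First I would substitute $s(\yv,\yv_{adv}) = c$ into both sides of eqn~\eqref{eq:fairnotion}. On the left, $\E[\tilde\yv\, s(\yv,\yv_{adv})] = c\,\E[\tilde\yv]$ and $\E[s(\yv,\yv_{adv})] = c$, so the ratio is $\E[\tilde\yv]$ (the constant $c$ cancels, provided $c \neq 0$, which I should note is implicit since $s$ maps into $[0,1]$ and a zero similarity would make the definition vacuous). On the right, similarly $\E[\tilde\yv\,\ones(a=k)\,c] / \E[\ones(a=k)\,c] = \E[\tilde\yv\,\ones(a=k)]/P(a=k) = \E[\tilde\yv \mid a=k]$. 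Hence eqn~\eqref{eq:fairnotion} becomes $\E[\tilde\yv] = \E[\tilde\yv\mid a=k]$ for all $k\in\mathcal A$, which is exactly the DP condition in Proposition~\ref{thm:dp-eop-mlc}, and therefore DP holds.

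\textbf{Case 2: $s(\yv,\yv') = \ones(\yv=\yv')$.} Here I substitute $s(\yv,\yv_{adv}) = \ones(\yv=\yv_{adv})$. The left side of eqn~\eqref{eq:fairnotion} becomes $\E[\tilde\yv\,\ones(\yv=\yv_{adv})]/\E[\ones(\yv=\yv_{adv})] = \E[\tilde\yv\mid\yv=\yv_{adv}]$, using the identity $\E[\ones(\yv=\yv_{adv})] = P(\yv=\yv_{adv})$ already established in the text preceding eqn~\eqref{eq:eo-2}. The right side becomes $\E[\tilde\yv\,\ones(a=k)\,\ones(\yv=\yv_{adv})]/\E[\ones(a=k)\,\ones(\yv=\yv_{adv})] = \E[\tilde\yv\mid a=k,\yv=\yv_{adv}]$ by the same conditional-expectation manipulation shown in the derivation of eqn~\eqref{eq:eo-2}. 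So eqn~\eqref{eq:fairnotion} reduces to $\E[\tilde\yv\mid\yv=\yv_{adv}] = \E[\tilde\yv\mid a=k,\yv=\yv_{adv}]$ for all $k$, which is precisely the EOp condition~\eqref{eq:eop}, and therefore EOp holds.

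The argument is essentially a direct unwinding of definitions, so there is no deep obstacle; the only points requiring care are (i) flagging the nonvanishing-denominator assumption (that $c\neq 0$ in Case 1, and $P(\yv=\yv_{adv})>0$ in Case 2, the latter being implicit whenever EOp is discussed at all), and (ii) being explicit that ``implies DP'' here means implies the expectation-matching sufficient condition of Proposition~\ref{thm:dp-eop-mlc}, which in turn implies the distributional independence $\hat\yv\perp a$ (resp.\ $\hat\yv\perp a\mid\yv_{adv}$). I would state both of these as remarks within the proof rather than belabor them.
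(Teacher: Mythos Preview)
Your proposal is correct and follows essentially the same approach as the paper: direct substitution of the two special forms of $s$ into eqn~\eqref{eq:fairnotion}, cancelling constants, and recognizing the resulting ratios as the conditional expectations appearing in the DP and EOp conditions of Proposition~\ref{thm:dp-eop-mlc}. Your added remarks about the nonvanishing-denominator assumptions and the precise sense of ``implies DP'' are minor refinements but do not change the argument.
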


\begin{proof}
    See appendix \ref{app:fairnotion-special}.
\end{proof}
\begin{proposition}[{\fairnotion} helps achieve DP and EOp]
\label{thm:fairnotion-unify}
For any multi-label classifier $h$ satisfying {\fairnotion}, 
its violation of DP will be arbitrarily small if $\gamma$ is sufficiently small; 
and its violation of EOp will be arbitrarily small if $\gamma$ is sufficiently large. 
More generally, 
its violation of DP is arbitrarily close to its violation of {\fairnotion} for sufficiently small $\gamma$, 
and its violation of EOp is arbitrarily close to its violation of {\fairnotion} for sufficiently large $\gamma$. 
\end{proposition}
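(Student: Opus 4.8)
\emph{Proof proposal.} The plan is to attach to every classifier $h$ a single scalar ``violation'' for each notion and show that the {\fairnotion} violation, viewed as a function of $\gamma$, converges to the DP violation as $\gamma\to 0$ and to the EOp violation as $\gamma\to\infty$, at a rate that does not depend on $h$. Fix a norm $\|\cdot\|$ on $\real^L$ and an aggregation over $k\in\mathcal A$ (say the maximum); let $V_{\mathrm{DP}}(h)$ be the largest over $k$ of $\|\E[\tilde\yv]-\E[\tilde\yv\mid a=k]\|$, let $V_{\mathrm{EOp}}(h)$ be the largest over $k$ of $\|\E[\tilde\yv\mid\yv=\yv_{adv}]-\E[\tilde\yv\mid a=k,\yv=\yv_{adv}]\|$ (the ratio form of eqn~\eqref{eq:eo-2}), and let $V_\gamma(h)$ be the largest over $k$ of the discrepancy between the two sides of eqn~\eqref{eq:fairnotion} with $s=s_\gamma$. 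By Proposition~\ref{thm:dp-eop-mlc} these vanish exactly when the respective notion holds, and if $h$ satisfies {\fairnotion} then $V_\gamma(h)=0$ by definition. Throughout I would assume the mild well-posedness conditions $P(a=k)>0$ and $P(a=k,\yv=\yv_{adv})>0$ for every $k$ (the latter is needed even to state EOp) and that $\yv_{adv}$ has at least one favorable target, so that $s_\gamma$ is everywhere defined.

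First I would handle $\gamma\to 0$. The key quantitative fact is that $0\le 1-s_\gamma(\yv,\yv_{adv})=1-e^{\gamma(\text{Jac}(\yv,\yv_{adv})-1)}\le 1-e^{-\gamma}\le\gamma$ uniformly over $\yv\in\mathcal Y$, since $\text{Jac}(\yv,\yv_{adv})-1\in[-1,0]$. Consequently, replacing $s_\gamma(\yv,\yv_{adv})$ by the constant $1$ inside any of the (at most four) expectations that make up $V_\gamma(h)$ changes it by at most $O(\gamma)$, as the integrands are products of $\tilde\yv\in[0,1]^L$ and indicators, while every denominator is bounded below by $e^{-\gamma}P(a=k)\ge e^{-1}\min_k P(a=k)$ or by $e^{-\gamma}\ge e^{-1}$ once $\gamma\le 1$. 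A routine ratio-perturbation estimate (numerator shifts by $O(\gamma)$, denominator bounded away from $0$ by a constant depending only on $p$) then gives $|V_\gamma(h)-V_{\mathrm{DP}}(h)|\le C(p)\,\gamma$ for $\gamma\le 1$, uniformly over $h$. This is the general small-$\gamma$ statement; and if in addition $h$ satisfies {\fairnotion} then $V_{\mathrm{DP}}(h)=|V_{\mathrm{DP}}(h)-V_\gamma(h)|\le C(p)\gamma\to 0$.

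For $\gamma\to\infty$ I would use the complementary estimate: $s_\gamma(\yv_{adv},\yv_{adv})=1$, while for $\yv\neq\yv_{adv}$ the set $\text{cate}(\yv)\cap\text{cate}(\yv_{adv})$ is a proper subset of $\text{cate}(\yv)\cup\text{cate}(\yv_{adv})$ and both have cardinality at most $L$, so $\text{Jac}(\yv,\yv_{adv})\le 1-\tfrac1L$ and hence $s_\gamma(\yv,\yv_{adv})\le e^{-\gamma/L}$. Writing each $\E[\,\cdot\,s_\gamma(\yv,\yv_{adv})]=\E[\,\cdot\,\ones(\yv=\yv_{adv})]+\E[\,\cdot\,\ones(\yv\neq\yv_{adv})s_\gamma(\yv,\yv_{adv})]$ shows the second term is $O(e^{-\gamma/L})$; the denominators are now bounded below by $P(\yv_{adv})$ and by $\min_k P(a=k,\yv=\yv_{adv})$; and the same ratio-perturbation estimate yields $|V_\gamma(h)-V_{\mathrm{EOp}}(h)|\le C'(p)\,e^{-\gamma/L}$, uniformly over $h$, whence $V_{\mathrm{EOp}}(h)\le C'(p)e^{-\gamma/L}\to 0$ when $h$ satisfies {\fairnotion}. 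In the limit these two computations also recover Proposition~\ref{thm:fairnotion-special}, since $s_\gamma\to 1$ and $s_\gamma\to\ones(\yv=\yv_{adv})$ respectively.

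The part I expect to demand the most care is not any individual estimate but the \emph{uniformity in $h$}: the proposition should hold with a single threshold on $\gamma$ that works for every classifier satisfying {\fairnotion}, so each error term must be controlled through the fixed distribution $p$ (via $\min_k P(a=k)$, $P(\yv_{adv})$, and $\min_k P(a=k,\yv=\yv_{adv})$) and the label count $L$ alone, never through $h$ itself; keeping the denominators bounded away from zero with $h$-free constants is the crux. The one genuinely combinatorial ingredient is the Jaccard gap $\text{Jac}(\yv,\yv_{adv})\le 1-1/L$ for $\yv\neq\yv_{adv}$, which is exactly what forces the $\gamma\to\infty$ weighting to concentrate on the single label group $\yv_{adv}$ and thereby reproduce EOp.
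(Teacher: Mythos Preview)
Your argument is correct and in fact sharper than the paper's. The paper bounds $\|\ell_{s^*}-\ell_{s_\gamma}\|$ via the reverse and ordinary triangle inequalities, then invokes the Dominated Convergence Theorem (using $\|\tilde\yv\|$ as dominating function and monotone pointwise convergence $s_\gamma\to s^*$) to conclude that each expectation $\E[\tilde\yv\, s_\gamma]$, $\E[s_\gamma]$, etc., converges to its $s^*$ counterpart; the conclusion is purely qualitative, with no rate. You instead extract quantitative pointwise bounds on $s_\gamma$ itself---$1-s_\gamma\le\gamma$ for small $\gamma$, and $s_\gamma(\yv,\yv_{adv})\le e^{-\gamma/L}$ for $\yv\neq\yv_{adv}$ via the Jaccard gap $1-1/L$---and push these through a ratio-perturbation estimate with $h$-free denominator lower bounds. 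This buys you explicit rates $O(\gamma)$ and $O(e^{-\gamma/L})$ and transparent uniformity in $h$, neither of which the paper's DCT route yields; it is also more elementary, avoiding measure-theoretic machinery entirely. Two cosmetic remarks: the paper aggregates over $k$ by a \emph{sum} (eqn~\eqref{eq:fair-violation-k}) rather than a max, which changes nothing since $|\mathcal A|$ is finite; and your Jaccard-gap step is the one genuinely new combinatorial observation relative to the paper, which simply appeals to $s_\gamma\to\ones(\yv=\yv_{adv})$ without isolating the $1/L$ rate.
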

\begin{proof}
    See appendix \ref{app:fairnotion-unify}.
\end{proof}

\begin{remark}
Proposition \ref{thm:fairnotion-special} reveals the connection between {\fairnotion} and DP (EOp).
Proposition \ref{thm:fairnotion-unify} further shows 
that {\fairnotion} condition indeed helps achieve DP and EOp, establishing a theoretical foundation of \textit{borrowing information from similar labels}. 

\end{remark}



\subsection{{\fairnotion} Regularized Model Training}
\label{sec:method-fair-train}

\paragraph{Fairness Violation} 
Violation of {\fairnotion} 
denoted by 
$\ell_{s_\gamma (\yv, \yv_{adv})} (h)$, 
is defined as
{\footnotesize
\begin{align}
	\sum_{k=1}^K 
	\left \| 
	\frac{\E [\tilde{ \yv} s_\gamma( \yv, \yv_{adv})]}{\E [s_\gamma( \yv, \yv_{adv})]} - 
	\frac{\E [\tilde{ \yv}  \ones(a=k) s_\gamma( \yv, \yv_{adv})]}{\E [\ones(a=k) s_\gamma ( \yv, \yv_{adv})]} 
	\right \| \label{eq:fair-violation-k}
\end{align}
}
where $\| \cdot \|$ is the $L_2$ norm. 
In words, we count how the fairness conditions in eqn \eqref{eq:fairnotion} are violated in all demographic subgroup $a=k$.
When $K=2$ (i.e., the sensitive feature is binary),
it can also be writen as
{\footnotesize
\begin{align}
    \left \| 
    \frac{\E [\tilde{ \yv}  \ones(a=1) s_\gamma( \yv, \yv_{adv})]}{\E [\ones(a=1) s_\gamma ( \yv, \yv_{adv})]}
    - 
    \frac{\E [\tilde{ \yv}  \ones(a=2) s_\gamma( \yv, \yv_{adv})]}{\E [\ones(a=2) s_\gamma ( \yv, \yv_{adv})]} 
    \right \|. \label{eq:fair-violation-k2}
\end{align}
}
DP and EOp, as discussed, are special cases of {\fairnotion} so we omit their forms.

\paragraph{In-processing with {\fairnotion}}
We use {\fairnotion} to improve fairness of classifier $h$ in an in-processing framework.
Specifically, on each mini-batch during training, 
we estimate the fairness violation (defined in eqn~\eqref{eq:fair-violation-k} or \eqref{eq:fair-violation-k2}) by eqn~\eqref{eq:fairnotion-est-all} and \eqref{eq:fairnotion-est-sub}.
The estimate defines the regularization term as parts of training loss. 
In particular, we train $h$ with stochastic gradient descent-based methods by minimizing 
\begin{align}
	\min_h \ell_{\text{mlc}}(h) + \lambda \ell_{s_\gamma (\yv, \yv_{adv})}(h). \label{eq:final-loss}
\end{align}
Here $\ell_{\text{mlc}}(h)$ is the loss for multi-label classification, and
$\ell_{s_\gamma (\yv, \yv_{adv})}(h)$ estimates the violation of {\fairnotion}. 
Hyperparameter $\lambda \geq 0$ balances the two losses.

\paragraph{Multivariate Probit Variational AutoEncoder (MPVAE)}
We use MPVAE as a backbone model to illustrate and verify the performance of our work. 
MPVAE is a multi-label classification method without fairness constraint enforcement, and we adapt it with a fairness penalty to ensure {\fairnotion}. 
\textcolor{red}{}

MPVAE is a variational autoencoder structured model that is capable to capture pairwise dependency in label $\yv$. 
It learns two encoders to map $\xv$ and $\yv$ into a shared representation space and decode with the same decoder. 
A Multivariate Probit (MP) model is used to predict $\hat \yv$, 
and model correlations between different labels $y_d$ and $y_d'$. 
Fig. \ref{fig:train-pipeline} illustrates the structure of MPVAE, 
where green color marks the additional fairness penalty.
Algorithm \ref{algo:fairnotion} in appendix \ref{app:algorithm} provides a concise summary of updating MPVAE with one step on a minibatch. 
Due to the page limitation, we refer readers to \citet{bai2020mpvae} for details about MPVAE.

\begin{figure}[tb]
    \centering
    \includegraphics[width=\linewidth]{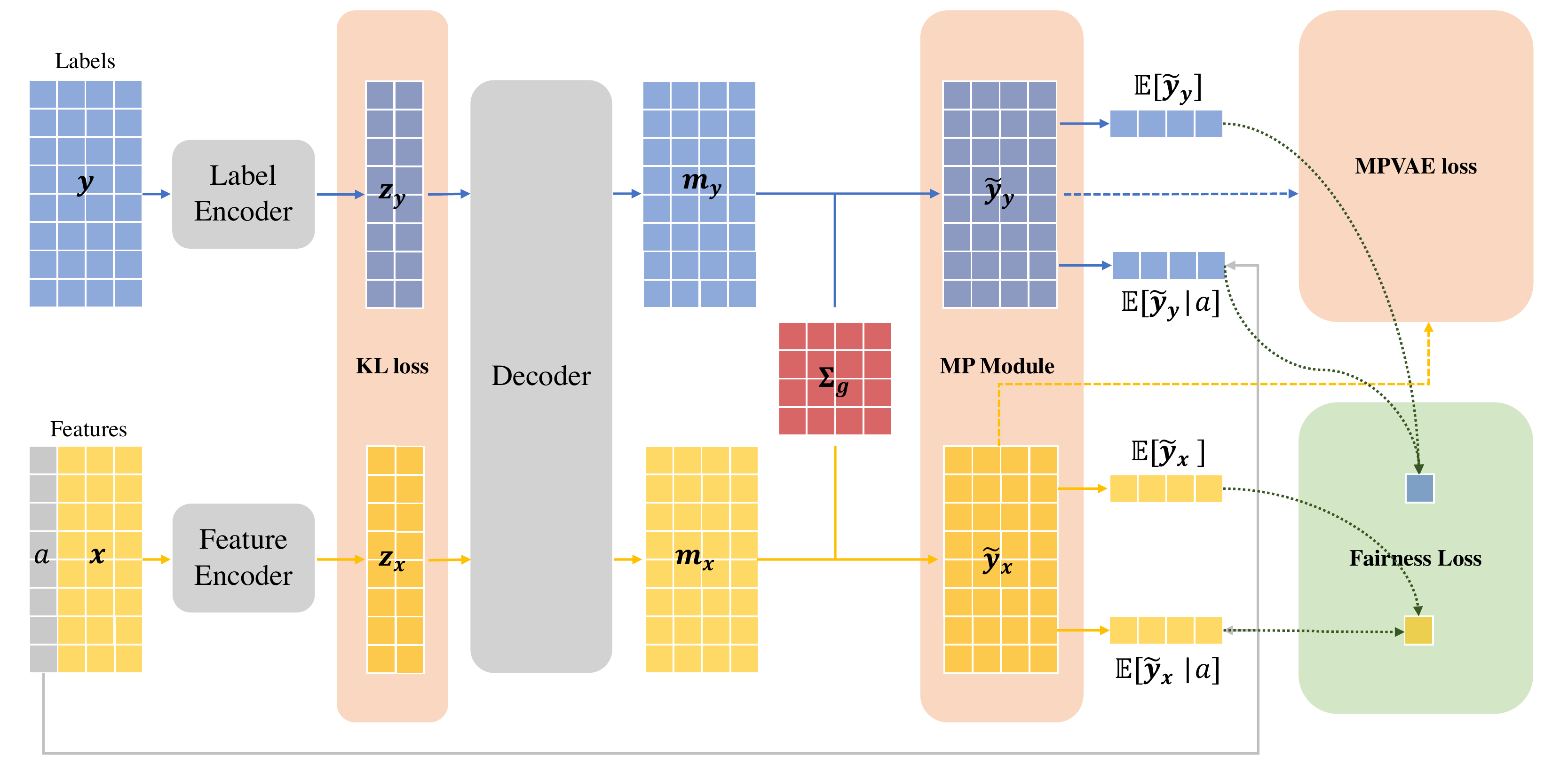}
    \caption{
    Framework of training MPVAE \cite{bai2020mpvae} with fairness regularization (in green). 
    Blocks in blue mark the \textit{label} branch and blocks in yellow mark the (non-sensitive) \textit{feature} branch. 
    During training, MPVAE predicts two probability vectors $\tilde{\yv}$ on two branches separately. 
    Both of them are used to construct the {\fairnotion} regularizer. 
    During testing, only yellow blocks (prediction from the \textit{feature} branch) are accessible.
    }
     \label{fig:train-pipeline}
\end{figure}


\section{Experiments}
\label{sec:experiment}

In this section, we evaluate {\fairnotion}
with the goal of providing insights from three aspects: 


\begin{itemize}
    \item How does {\fairnotion} approximate DP and EOp? 
    \item How does {\fairnotion} help achieve DP and EOp? 
    \item How does {\fairnotion} affect fairness-accuracy tradeoff? 
\end{itemize}

In the following, we will discuss experiment settings first and then present the details about the evaluation from these three aspects. 
\subsection{Datasets and Experiment Setup}

\subsubsection{Datasets}
\label{sec:data-mlc}

Due to the lack of existing work in fairness-aware multi-label classification,
we transform two tabular datasets that are ubiquitous in fairness literature into multi-label settings. 
Towards this goal, we select some features and treat them as additional targets. 
To help focus on the challenge brought by multi-label, we use binary sensitive features,
but as defined in eqn~\eqref{eq:eop-full}, our methods can easily generalize to where more complicated sensitive features are used\footnote{
See appendix~\ref{app:multi-class} for experiments where a multi-class sensitive feature \textit{race} is considered.}.

\begin{itemize}[leftmargin=0em]
\item \textit{Adult} \cite{kohavi1996scale} is a widely-used fairness dataset from UCI repository that contains 48,842 samples. 
Original Adult dataset contains 112 features and a binary label \textit{income level}, which denotes whether an individual's yearly income is greater than \$50K dollars or not. We further use \textit{workclass} and \textit{occupation} as two other targets.
In terms of sensitive features, 
we follow \citet{reddy2021benchmarking} and binarize \textit{age} into \textit{25-44 years old} and \textit{else}.
This allows us to construct two balanced demographic subgroups.

\item \textit{Credit} \cite{yeh2009comparisons} is another popular fairness dataset from UCI repository.
It contains 30,000 samples,
each sample is associated with 24 features 
and a binary label indicates the existence of \textit{default payments}.
We treat \textit{education level} as an additional target,
and use \textit{gender} as the sensitive feature.
\end{itemize}

\subsubsection{Baselines}
We compare MPVAE $h$ trained with proposed {\fairnotion} regularizer with three baseline methods: 
(1) No regularizer: use $\ell_{mlc}$ loss only by setting $\lambda = 0$ in eqn~\eqref{eq:final-loss};
(2) DP regularizer: use DP violation as a regularizer, can be seen as an extension of \citet{calders2009building};
and (3) EOp regularizer: use EOp violation as a regularizer, which can be seen as an extension of \citet{zafar2017fairness}. 
Regularizers are constructed according to eqn~\eqref{eq:fair-violation-k2}.
    
    
%

\subsubsection{Evaluation Metrics}
To evaluate the fairness mitigation, we report the values of eqn~\eqref{eq:fair-violation-k2} on test sets.
As these are violations of fairness, smaller values indicate better performance. 
To evaluate the multi-label classification, 
we report three popular metrics in multi-label classifications \cite{wu2017unified, bai2020mpvae}: 
micro-averaged F1 (micro-F1), macro-averaged F1 (macro-F1), and example-averaged F1 (example-F1) as defined below
{\footnotesize
\begin{align*}
    \text{micro-F1}
    &= 
    \frac{2 \sum_{l=1}^L \sum_{i=1}^N y^{(i)}_{l} \hat y_{l}^{(i)} }{\sum_{l=1}^L \sum_{i=1}^N  (y^{(i)}_{l} + \hat y_{l}^{(i)}) } \\
    \text{macro-F1}
    &= 
    \frac{1}{L} \sum_{l=1}^L \frac{2 \sum_{i=1}^N y^{(i)}_{l} \hat y_{l}^{(i)} }{ \sum_{i=1}^N (y^{(i)}_{l} + \hat y_{l}^{(i)}) } \\
    \text{example-F1} 
    &=
    \frac{1}{N} \sum_{i=1}^N \frac{2 \sum_{l=1}^L y^{(i)}_{l} \hat y_{l}^{(i)} }{ \sum_{l=1}^L (y^{(i)}_{l} + \hat y_{l}^{(i)}) }.
\end{align*}
}


These metrics compute either F1-score over the label matrix or averaged F1-score over targets or samples.

\subsubsection{Implementation Details}
For {\fairnotion}, we use $\gamma = 1, 5, 10$
to illustrate when it approximates DP or EOp. 
We also vary $\lambda$, the coefficient of fairness loss $\ell_{s_\gamma (\yv, \yv_{adv})}(h)$,
from $1$ to $5000$ to study the trade-off between fairness and accuracy (in terms of micro-, macro-, and example-F1). 
We randomly choose 70\% data for training and 30\% for testing. 
Other hyperparameters for MPVAE training such as batch size, epochs, and learning rates are fixed throughout all experiments.
A full list of hyperparameters used in this paper is provided in appendix \ref{app:hyperparameter}.

\subsection{Estimate DP and EOp with {\fairnotion}}
We first evaluate how well {\fairnotion} can approximate DP and EOp to answer RQ1. 
To do so, we train a MPVAE without any regularizers for 20 epochs on Adult and Credit datasets and evaluate how it violates DP and EOp.
We choose the largest label group (i.e., the label that appears most frequent) as the advantaged group. 
This allows us to construct a reliable estimate of EOp, which could be used as the ground truth. 

Figure \ref{fig:fairnotion-unify} shows how fairness violations estimated by {\fairnotion} change under different $\gamma$,
with DP and EOp marked on the left and right y-axis.
From the figure, the starting points of {\fairnotion} curves at $\gamma=0.1$ locate close to DP, 
and the ending points at $\gamma=10$ are close to EOp; 
these observations justify the effectiveness of {\fairnotion} in approximating DP and EOp, consistent with theoretical analysis.

\begin{figure}[tb]
    \centering
    \includesvg[pretex=\fontsize{4}{8}\selectfont,width=\linewidth]{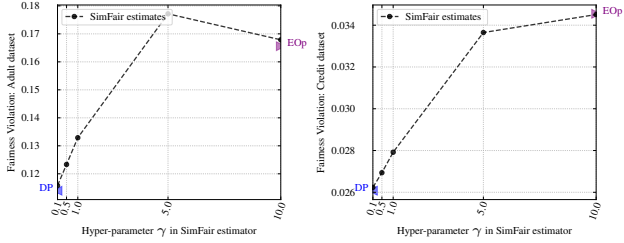}
    \caption{
    $s_\gamma$-SimFair can estimate DP and EOp with different hyperparameter $\gamma$, 
    DP and EOp estimates are marked on left and right y-axes.
    }
    \label{fig:fairnotion-unify}
    \vspace{-0.1in}
\end{figure}

Next, we study the robustness of three estimators
by varying the numbers of samples in the advantaged group to different levels and evaluating how estimates of DP and EOp change. 
Results summarized in Table \ref{tab:fairnotion-estimate-sub}
are averaged over 10 independent replications.
Empirically, EOp estimator degrades drastically as the size of \textit{observed} advantaged group decreases. 
{\fairnotion} with large $\gamma(=5)$, in contrary, produces more stable EOp estimates when EOp estimator fails. 
In terms of DP, both its own and {\fairnotion} estimator produce similarly stable results, 
which is reasonable as we only decrease the size of the advantaged group.

\begin{table}[tb]
    \centering
    \resizebox{0.85\linewidth}{!}{
    \begin{tabular}{rrccccccc}
    \toprule
    & $\yv_{adv}$ obs.($\%$)  & DP &  $ s_{ 0.1 } $-SF &  $ s_{0.5} $-SF &  $ s_{1} $-SF &  $ s_{5} $-SF & $ s_{10} $-SF &  EOp \\
    \cmidrule(l){2-9}
    \multirow{4}{*}{\rotatebox[origin=c]{90}{Adult}}
    & 100 \% & $0.11^*$     & 0.12  & 0.12  & 0.13  & 0.18  & 0.17  & $0.17^*$  \\
    & 70 \%  & 0.11         & 0.11  & 0.12  & 0.13  & 0.18  & 0.18  & 0.17      \\
    & 30 \%  & 0.10         & 0.10  & 0.11  & 0.12  & 0.17  & 0.18  & 0.17      \\
    & 10 \%  & 0.10         & 0.10  & 0.10  & 0.11  & 0.14  & 0.16  & 0.17      \\
    & 5 \%  & 0.10          & 0.10  & 0.10  & 0.11  & 0.15  & 0.23  & 0.27      \\
    \cmidrule(l){2-9}
    \multirow{4}{*}{\rotatebox[origin=c]{90}{Credit}}
    & 100 \% & $0.03^*$     & 0.03  & 0.03  & 0.03  & 0.03  &  0.03  & $0.03^*$  \\
    & 70 \%  & 0.03         & 0.03  & 0.03  & 0.03  & 0.03  &  0.04  & 0.04      \\
    & 30 \%  & 0.02         & 0.02  & 0.02  & 0.02  & 0.03  &  0.03  & 0.03      \\
    & 10 \%  & 0.02         & 0.02  & 0.02  & 0.02  & 0.03  &  0.04  & 0.04      \\
    & 5 \%   & 0.02         & 0.02  & 0.02  & 0.02  & 0.03  &  0.04  & 0.05      \\
    \bottomrule
    \end{tabular}
    }
    \caption{
    DP, EOp, and {\fairnotion} estimates (denoted as $s_\gamma$-SF) on Adult and Credit datasets. 
    Certain portions of samples in the advantaged group are kept (col. $\yv_{adv}$ obs.($\%$)) to check the robustness of different estimators. 
    Results are averaged over 10 replications. 
    Estimates of DP and EOp on $100\%$ portion of samples are considered as the \textit{ground truth} (marked with asterisk).  
    {\fairnotion} estimator is more robust than EOp estimator. 
    }
    \label{tab:fairnotion-estimate-sub}
\end{table}

\subsection{Performance of Regularization}

After showing that {\fairnotion} can approximate DP and EOp well,
we evaluate how well it can help achieve DP and EOp. 

We start with reporting fairness violations of MPVAE trained with DP, EOp, and {\fairnotion} regularizers.
On each dataset, two potential advantaged groups are considered.
The first group is the largest label group as in the last subsection, 
and the second group is chosen to be a \textit{small} label group but we can still estimate EOp on the test set. 
For Adult dataset, since it has more labels, we choose the 18-th largest label group, 
which is the smallest one that has more than 100 test samples from the advantaged group out of 152 possible labels.  
For Credit dataset, we choose the 9-th largest, 
this group has at least 10 test samples from the advantaged label out of 13 possible labels. 
Throughout experiments, we fix $\lambda = 10$ and run 10 replications to smooth out randomness. 

Table \ref{tab:fairnotion-perform} shows resultant DP and EOp achieved by different methods. 
In all experiments, {\fairnotion} performs competitive to DP regularizer
and better than EOp regularizer in terms of minimizing these metrics as objectives.
Notably, when the advantaged group is small, the vanilla EOp regularizer mitigates EOp violation poorly, 
but {\fairnotion} still reduces it significantly. 
Moreover, {\fairnotion} maintains a better DP-EOp balance, even they are known to be incompatible \cite{barocas2017fairness}. 
For example, $s_1$-SF regularzier helps achieve better DP and EOp simultaneously than a DP regularizer on the largest label group on Adult dataset.
We interpret this observation as a byproduct of the \textit{biased} estimation given by {\fairnotion}. 
As {\fairnotion} is \textit{biased} towards DP (EOp) when estimating EOp (DP), 
such bias implicitly considers the other metric and hence strikes a batter balance. 
These results clearly establish the power of {\fairnotion} in minimizing DP and EOp.

\begin{table}[htpb]
    \centering
    \resizebox{\linewidth}{!}{
    \begin{tabular}{rrr ccc ccc}
    \toprule
    & $|\yv_{adv}|$ & Metric & DP reg & $ s_{ 1 } $-SF reg & $ s_{5} $-SF reg & $ s_{10} $-SF reg & EOp reg & No reg\\
    \cmidrule(l){2-9}
    {\multirow{4}{*}{\rotatebox[origin=c]{90}{Adult}}} 
    & {\multirow{2}{*}{{No.1}}}   & DP  & 0.038 & {\bf 0.031 } & 0.038  & 0.043  & 0.045  & 0.111  \\
    &                             & EOp & 0.051 & 0.042  & {\bf0.030 } &  0.034  & 0.035  & 0.161  \\
    \cmidrule(l){3-9}
    & {\multirow{2}{*}{{No.18}}}  & DP  & {\bf 0.038 } & {\bf 0.038 } & 0.043  & 0.045  & 0.094  & 0.111  \\
    &                             & EOp & 0.076  & 0.072  & 0.037  & {\bf 0.027 } & 0.066  & 0.095  \\
    \cmidrule(l){2-9}
    {\multirow{4}{*}{\rotatebox[origin=c]{90}{Credit}}} 
    & {\multirow{2}{*}{{No.1}}}   & DP  & 0.018  & 0.018  & {\bf 0.017 } & 0.018  & 0.018  & 0.029  \\
    &                             & EOp & 0.026  & 0.026  & {\bf 0.025 } & {\bf 0.025 } & 0.026  & 0.038  \\
    \cmidrule(l){3-9}
    &{\multirow{2}{*}{{No.9}}}   & DP   & {\bf 0.018 } & {\bf 0.018 } & 0.019  & 0.019  & 0.030  & 0.030  \\
    &                            & EOp  & 0.202  & {\bf 0.192 } & 0.193  & 0.197  & 0.241  & 0.241  \\
    \bottomrule
    \end{tabular}
    }
    \caption{
    DP and EOp violations of MPVAE trained with DP, EOp, and {\fairnotion} regularziers. 
    On each dataset, a large and a small advantaged groups (measured by their ranking in col. $|\yv_{adv}|$) are tested. 
    Results are averaged over 10 replications, best results are in bold. 
    }
    \label{tab:fairnotion-perform}
\end{table}

To better reveal the limitation of EOp regularizer, 
we further evaluate how fair a model can be achieved by the use of different methods.
To do so, we choose a large $\lambda=5000$.
Note that such large $\lambda$, will be shown shortly, significantly impedes accuracy.
Here we sacrifice all accuracy to check the potential of different methods.

We run 3 replications on top 18 largest label groups in Adult dataset and
top 9 largest label groups in Credit dataset as advantaged group separately\footnote{As described above, these groups have sufficient test samples to check violations.}. 
Figure \ref{fig:fairnotion-all} shows resultant DP and EOp achieved by different methods. 
Compared to EOp regularizer, which performs the worst on all labels, {\fairnotion} is much more stable. 
In extreme cases, {\fairnotion}, as a good approximation of EOp, also encounter failure ultimately, 
but it is much more robust. 



\begin{figure}[htpb]
    \renewcommand{\t}[1]{
        $s_{\scalebox{0.8}{#1}}$
    }

    \def\figwidth{0.95\linewidth}
    \centering
    \begin{subfigure}[b]{\figwidth}
        \centering
        \includesvg[pretex=\fontsize{4.5}{8}\selectfont,width=\linewidth]{figures/fairnotion-all-adult.svg}
        \caption{Adult dataset}
    \end{subfigure}
    \\
    \begin{subfigure}[b]{\figwidth}
        \centering
        \includesvg[pretex=\fontsize{4}{8}\selectfont,width=\linewidth]{figures/fairnotion-all-credit.svg}
    \caption{Credit dataset}
    \end{subfigure}
    
    \caption{
    Achieved DP and EOp as the advantaged group becomes smaller. 
    An extremely large $\lambda=5000$ is used to enforce fairness mitigation. 
    Compared to EOp regularizer, {\fairnotion} is more robust to the sample size. 
    }
    \label{fig:fairnotion-all}
\end{figure}


\subsection*{Fairness-Accuracy Tradeoff}
\label{sec:eop-accuracy}

We end up this section with a study on the tradeoff between fairness and accuracy on the two label groups from the previous section.
Hyperparameter $\lambda$ varies from $1$ to $5000$ and results are averaged over 10 replications.
Due to the page limit, we only report EOp-accuracy tradeoffs on Credit dataset in Figure \ref{fig:fairnotion-tradeoff-eop-credit} here
and defer other figures to appendix \ref{app:tradeoffs}. 
Nevertheless,
conclusions drawn here apply to all experiments. 

Overall, micro- and example-F1 are much more robust to fairness requirement than macro-F1.
On Credit dataset, they are even improved slightly when a small fairness regularization is added. 
We hypothesize that fairness regularization indirectly adds smooth conditions and penalizes unstable predictions. 
{\fairnotion} has similar tradeoff patterns compared with the DP regularizer and does not encounter instability as EOp regularizer does.
In addition, on small label groups where EOp regularizer fails, 
its {\fairnotion} approximation succeeds in achieving low EOp violation,
and performs one of the best in handling tradeoffs. 

\begin{figure}[tb]
    \renewcommand{\t}[1]{
        $s_{\scalebox{0.8}{#1}}$
    }
    
    \def\figwidth{0.95\linewidth}
    
    
    \centering
    \begin{subfigure}[b]{\figwidth}%
        \centering%
        \includesvg[pretex=\fontsize{3.5}{8}\selectfont,width=\linewidth]{figures/credit-y1-eop.svg}
        \caption{Credit dataset: No.1 label group}
    \end{subfigure}
    
    \begin{subfigure}[b]{\figwidth}
        \centering
        \includesvg[pretex=\fontsize{3.5}{8}\selectfont,width=\linewidth]{figures/credit-y9-eop.svg}
        \caption{Credit dataset: No.9 label group}
    \end{subfigure}
    
    \caption{
    EOp-accuracy tradeoffs on Credit dataset. EOp regularizer is unstable and ineffective when the advantaged group is small,
    {\fairnotion}, on the other hand, preserves similar tradeoff trend as DP on both large and small label groups.
    }
 \label{fig:fairnotion-tradeoff-eop-credit}
\end{figure}

\section{Conclusions}
\label{sec:conclusion}

In this paper, We study the important problem of enforcing fairness on multi-label classification.
Given the ubiquitous imbalanced issue with respect to label groups, 
we propose {\fairnotion}, an effective framework that helps achieve existing group fairness metric: DP and EOp.
We first establish a formal extension of DP and EOp condition to multi-label scenarios, 
then prove that (extended) DP and EOp can be exactly expressed by {\fairnotion}, 
and can be approximated arbitrarily well. 
Experiments on two real-world datasets echos with theoretical analysis and reveals limitations of EOp regularizer. 
{\fairnotion}, in contrary, shows strong robustness against the challenges EOp regularizer cannot overcome. 

{\fairnotion} is a general tool. 
The concept and technique derived in this paper can be applied to multi-class classification as well, 
so long as a proper similarity function can be defined in the label space $\mathcal Y$. In the future, we plan to further conduct theoretical analysis on {\fairnotion} regularizer and convergence, and apply {\fairnotion} in a post-processing framework. 


\section*{Acknowledgement}

This work is supported in part by the US National Science Foundation under grant NSF IIS-2226108 and NSF IIS-2141037. Any opinions, findings, and conclusions or recommendations expressed in this material are those of the author(s) and do not necessarily reflect the views of the National Science Foundation.

\bibliography{ref.bib}

\clearpage
\appendix
\onecolumn

\setlength{\floatsep}{12.0pt plus 2.0pt minus 2.0pt} 
\setlength{\textfloatsep}{20.0pt plus 2.0pt minus 4.0pt} 
\setlength{\intextsep}{12.0pt plus 2.0pt minus 2.0pt} 
\setlength{\dbltextfloatsep}{20.0pt plus 2.0pt minus 4.0pt} 
\setlength{\dblfloatsep}{12.0pt plus 2.0pt minus 2.0pt} 
\setlength{\abovecaptionskip}{10pt} 
\setlength{\belowcaptionskip}{0pt} 

\setlength{\abovedisplayskip}{2ex} 
\setlength{\belowdisplayskip}{2ex} 
\setlength{\arraycolsep}{0.5em} 
\renewcommand\mathloose
{%
    \thinmuskip=3mu   
    \medmuskip=4mu    
    \thickmuskip=5mu plus 5mu  
}
\mathloose


\section{Notation Table}
\label{app:notation}

We summarize notations used in this paper in table \ref{tab:notation}. 

\begin{table}[htp]
    \centering
    \begin{tabular}{l l}
    \toprule
    Notations & Meaning \\
    \midrule
    $\xv \in \mathcal X$                                 & Non-sensitive feature and
                                                         non-sensitive feature space.\\
    $a \in \mathcal A$                                   & Sensitive feature and
                                                         sensitive feature space.\\
    $\yv \in \mathcal Y=\{0, 1\}^L$                      & Label and label space.\\
    $h = f \circ g: \mathcal{X} \rightarrow \mathcal Y $ & A composited multi-label classifier, 
                                                         $f: \mathcal X \rightarrow [0, 1]^L$ and $g: [0, 1]^L \rightarrow \mathcal Y$. \\
    $\hat{\yv} = h(\xv) = g(f(x))$                       & Predicted label (Prediction). \\
    $\tilde \yv = f(\xv)$                                & Predicted probability vector. \\
    $s_\gamma(\yv, \yv')$                                & Similarity between label $\yv$ and $\yv'$.\\
    $\gamma \geq 0$                                      & Scaling hyperparameter in similarity. \\
    $\lambda \geq 0$                                     & Coefficient of fairness penalty. \\
    $\ell_{mlc}(h)$                                      & Multi-label classification loss.\\
    $\hat \ell_{s_\gamma (\yv, \yv_{adv})}(h)$           & {\fairnotion} violation (penalty). \\
    \bottomrule
    \end{tabular}
    \caption{Main notations used in this paper.
    }
    \label{tab:notation}
\end{table}

\section{Omitted Proofs}
\subsection{Proof of Proposition \ref{thm:dp-eop-mlc}}
\label{app:dp-eop-mlc}


\begin{proposition}[DP and EOp condition in MLC]
\label{thm:dp-eop-mlc-app}
For a multi-label classifier that takes the form $h = f \circ g$, 
where $\tilde \yv = g(\xv)$ is the predicted probability and $\hat \yv = f(\tilde \yv)$ is computed elementwisely, 
DP and EOp hold if for any $ k \in \mathcal A$
\begin{align}
    \text{DP: } 
    &\quad 
    \E [ \tilde \yv \mid a=k ] = \E [\tilde \yv ] \notag \\
    \text{EOp: }
    &\quad
    \E [ \tilde \yv \mid a=k, \yv=\yv_{adv}] = \E [\tilde \yv \mid \yv=\yv_{adv}]. \label{eq:eop-app}
\end{align}
\end{proposition}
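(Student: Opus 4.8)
The plan is to obtain each independence statement from the corresponding equality of conditional expectations by conditioning on the non-sensitive feature $\xv$ and applying the law of total expectation. Two structural facts drive the argument. First, since $\hat\yv = f(\tilde\yv)$ and $\tilde\yv = g(\xv)$, the prediction is produced from $\xv$ alone, so $\hat\yv$ is conditionally independent of $(a,\yv)$ given $\xv$; moreover, in the sense that ``the distribution of $\hat\yv$ is fully parameterized by $\tilde\yv$'', the $l$-th predicted probability is the conditional mean of the $l$-th prediction, $\E[\hat y_l\mid\xv]=\tilde y_l$. Second, each $\hat y_l$ is $\{0,1\}$-valued, so the independence we must establish is equivalent to equality of the conditional positive rates $P(\hat y_l=1\mid\cdot)$ across the demographic groups $a=k$.

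For DP, I would compute, for every coordinate $l$ and every $k\in\mathcal A$,
\[
P(\hat y_l=1\mid a=k)=\E[\hat y_l\mid a=k]=\E\big[\E[\hat y_l\mid\xv]\,\big|\,a=k\big]=\E[\tilde y_l\mid a=k],
\]
where the last two equalities are the tower property (using $\hat\yv\perp a\mid\xv$) and $\E[\hat y_l\mid\xv]=\tilde y_l$, respectively; the same computation with no conditioning gives $P(\hat y_l=1)=\E[\tilde y_l]$. Hence the hypothesis $\E[\tilde\yv\mid a=k]=\E[\tilde\yv]$ forces $P(\hat y_l=1\mid a=k)=P(\hat y_l=1)$ for all $l$ and $k$, i.e.\ $\hat\yv\perp a$. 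For EOp I would run the same steps but condition additionally on the event $\yv=\yv_{adv}$ and use $\hat\yv\perp(a,\yv)\mid\xv$, obtaining $P(\hat y_l=1\mid a=k,\yv=\yv_{adv})=\E[\tilde y_l\mid a=k,\yv=\yv_{adv}]$ and $P(\hat y_l=1\mid\yv=\yv_{adv})=\E[\tilde y_l\mid\yv=\yv_{adv}]$; substituting the hypothesis~\eqref{eq:eop-app} then yields $\hat\yv\perp a\mid\yv_{adv}$.

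I expect the one delicate point to be the passage from these coordinatewise statements to the vector-level independence that is claimed: matching conditional means of $\tilde\yv$ pins down the conditional law of each binary $\hat y_l$ separately, whereas joint independence of $\hat\yv$ from $a$ would in general also require matching the mixed moments of $\tilde\yv$. This is precisely why it is natural to state both the fairness condition~\eqref{eq:eop-app} and the violation metric~\eqref{eq:fair-violation-k} coordinatewise through the expected predicted probabilities, and it is the only place where the write-up needs care; everything else reduces to the law of total expectation.
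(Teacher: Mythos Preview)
Your argument is essentially the same as the paper's: both hinge on the conditional independence $\hat\yv\perp(a,\yv)\mid\xv$ (since the prediction is computed from $\xv$ alone) and then average out $\xv$ to identify the conditional law of $\hat\yv$ with the conditional expectation of $\tilde\yv$. The only cosmetic difference is that the paper writes this averaging as an explicit integral over the density $p(\hat\yv\mid a=k,\yv=\yv_{adv})=\int p(\hat\yv\mid\xv)\,p(\xv\mid a=k,\yv=\yv_{adv})\,\dif\xv$, whereas you phrase it via the tower property $\E[\hat y_l\mid a=k]=\E[\E[\hat y_l\mid\xv]\mid a=k]$; these are the same computation. Your caveat about coordinatewise versus joint independence is well taken and is in fact a point the paper's proof glosses over (it passes silently from the vector $\tilde\yv$ parameterizing the distribution of $\hat\yv$ to matching conditional laws), so your write-up is, if anything, more careful on this issue.
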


\begin{proof}
    Here we derive the condition of EOp in eqn~\eqref{eq:eop-app}, the proof for DP can be obtained in the same way. 
    Note that the conditional distribution of prediction $\yv$ in $k$-th demographic subgroup of the advantaged group is given by
    \begin{align*}
        & p(\hat \yv \mid a = k, \yv = \yv_{adv}) 
        = 
        \int p(\hat \yv, \xv \mid a = k, \yv = \yv_{adv}) \dif \xv \\
        &=
        \int p(\hat \yv \mid \xv, a = k, \yv = \yv_{adv}) p(\xv \mid a=k, \yv = \yv_{adv}) \dif \xv \\
        &\overset{(a)}{=}
        \int p(\hat \yv \mid \xv) p(\xv \mid a=k, \yv = \yv_{adv}) \dif \xv \\
        &= 
        \int g(\xv) p(\xv \mid a = k, \yv = \yv_{adv}) \dif \xv \\
        &= 
        \E [ \tilde \yv \mid a=k, \yv=\yv_{adv}],
    \end{align*}
    where $(a)$ 
    holds because of the conditional independence $\hat \yv \perp (\yv, a) \mid \xv $. 
    Similar derivation gives us 
    \begin{align}
    p(\hat \yv \mid \yv = \yv_{adv}) = \E [\tilde \yv \mid \yv = \yv_{adv} ]. \notag
    \end{align}
    This indicates that the conditional independence requirement in EOp can be fulfilled if corresponding conditional expectations match.
\end{proof}

\subsection{Proof of Proposition \ref{thm:fairnotion-special}}
\label{app:fairnotion-special}


\begin{proposition}[DP and EOp are special cases of {\fairnotion}]

\label{thm:fairnotion-special-app}
Consider {\fairnotion} defined in eqn~\eqref{eq:fairnotion}, 
if similarity $s$ is a constant function $s(\yv, \yv')  = c$ for some $c$, 
then {\fairnotion} implies DP;
if $s$ is an indicator function $s(\yv, \yv') = \ones(\yv = \yv')$, 
then {\fairnotion} implies EOp.

\end{proposition}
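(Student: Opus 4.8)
The plan is to verify each of the two special cases directly from the defining equation~\eqref{eq:fairnotion} of {\fairnotion} and match it against the DP and EOp conditions established in Proposition~\ref{thm:dp-eop-mlc} (equivalently, the expectation forms in eqn~\eqref{eq:eop}, \eqref{eq:eo-2}).

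First I would treat the constant case $s(\yv, \yv') = c$. Substituting into the left-hand side of eqn~\eqref{eq:fairnotion}, the constant $c$ factors out of both the numerator and denominator: $\E[\tilde\yv\, s(\yv,\yv_{adv})] = c\,\E[\tilde\yv]$ and $\E[s(\yv,\yv_{adv})] = c$, so the ratio collapses to $\E[\tilde\yv]$. The same cancellation on the right-hand side gives $\E[\tilde\yv\,\ones(a=k)]/\E[\ones(a=k)] = \E[\tilde\yv \mid a=k]$. Hence eqn~\eqref{eq:fairnotion} becomes $\E[\tilde\yv] = \E[\tilde\yv \mid a=k]$ for all $k\in\mathcal A$, which is exactly the DP condition in Proposition~\ref{thm:dp-eop-mlc}; invoking that proposition, $h$ is fair in terms of DP. (One should note $c\neq 0$ is implicitly required for the ratios to be defined, but a constant similarity of zero is degenerate and can be excluded, or handled by taking $c>0$ without loss of generality since $s$ maps into $[0,1]$.)

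Next I would treat the indicator case $s(\yv,\yv') = \ones(\yv = \yv')$. Then $s(\yv,\yv_{adv}) = \ones(\yv = \yv_{adv})$, and substituting into eqn~\eqref{eq:fairnotion} literally reproduces eqn~\eqref{eq:eo-2}:
\begin{align*}
  \frac{\E[\tilde\yv\,\ones(\yv=\yv_{adv})]}{\E[\ones(\yv=\yv_{adv})]}
  = \frac{\E[\tilde\yv\,\ones(a=k)\,\ones(\yv=\yv_{adv})]}{\E[\ones(a=k)\,\ones(\yv=\yv_{adv})]}
\end{align*}
for all $k\in\mathcal A$. Since eqn~\eqref{eq:eo-2} was shown (in the text preceding Definition~\ref{def:fairnotion}) to be equivalent to the EOp condition~\eqref{eq:eop}, Proposition~\ref{thm:dp-eop-mlc} again yields that $h$ is fair in terms of EOp.

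Honestly, there is no serious obstacle here: the proof is essentially a pair of substitutions followed by a cancellation and an appeal to already-established equivalences. The only point deserving care is the bookkeeping around well-definedness of the conditional-expectation ratios — i.e., ensuring $\E[s(\yv,\yv_{adv})]>0$ in the constant case (trivial for $c>0$) and $P(\yv=\yv_{adv})>0$ in the indicator case (a standing assumption, since otherwise EOp itself is vacuous). I would state this caveat once and then carry out the two substitutions cleanly.
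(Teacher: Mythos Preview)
Your proposal is correct and follows essentially the same approach as the paper's own proof: substitute the constant (resp.\ indicator) similarity into eqn~\eqref{eq:fairnotion}, cancel to recover the DP condition $\E[\tilde\yv]=\E[\tilde\yv\mid a=k]$ (resp.\ reproduce eqn~\eqref{eq:eo-2} for EOp), and invoke Proposition~\ref{thm:dp-eop-mlc}. Your treatment is in fact slightly more careful than the paper's in flagging the well-definedness caveats ($c>0$, $P(\yv=\yv_{adv})>0$).
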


\begin{proof}
The EOp case can be seen by taking special $s(\yv, \yv') = \ones(\yv = \yv')$ in eqn~\eqref{eq:fairnotion}.
To prove the DP case, 
note that for constant function $s(\yv, \yv')  = c$, 
the left hand side of eqn~\eqref{eq:fairnotion} becomes $\E [c \tilde{ \yv}] / \E [c] = \E [\tilde{ \yv}]$, 
and the right hand side is

\begin{align*}
    & \frac{\E [c \tilde{\yv} \ones(a=k)]}{ \E [c \ones(a=k)]}
    = 
    \frac{\E [\tilde{\yv} \ones(a=k)]}{ \E [\ones(a=k)]} \\
    &= \frac{1}{P(a=k)} \iint  \tilde{\yv} \ones(a=k) p(\tilde{\yv}, a) \dif a \tilde \yv \\
    &= \int \tilde{\yv} \int \frac{1}{P(a=k)} p(\ones(a=k) p(\tilde{\yv}, a) ) \dif a \dif \tilde{\yv} \\
    &= \int \tilde{\yv} p(\tilde{\yv} \mid a=k) \dif \tilde{\yv} = \E [\tilde{\yv} \mid a = k].
\end{align*}
Now eqn~\eqref{eq:fairnotion} requires $\E [\tilde{ \yv} \mid a = k] = \E [\tilde{ \yv}]$ for all $k$, 
which is exactly the condition of DP. 
\end{proof}

\subsection{Proof of Proposition \ref{thm:fairnotion-unify}}
\label{app:fairnotion-unify}


\begin{proposition}[{\fairnotion} helps achieve DP and EOp]
\label{thm:fairnotion-unify-app}
For any multi-label classifier $h$ satisfying {\fairnotion}, 
its violation of DP will be arbitrarily small if $\gamma$ is sufficiently small; 
and its violation of EOp will be arbitrarily small if $\gamma$ is sufficiently large. 
More generally, 
its violation of DP is arbitrarily close to its violation of {\fairnotion} for sufficiently small $\gamma$, 
and its violation of EOp is arbitrarily close to its violation of {\fairnotion} for sufficiently large $\gamma$. 
\end{proposition}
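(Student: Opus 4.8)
The plan is to analyze the behavior of the similarity function $s_\gamma(\yv,\yv_{adv}) = \exp(\gamma(\mathrm{Jac}(\yv,\yv_{adv})-1))$ as $\gamma \to 0$ and as $\gamma \to \infty$, and to argue that the ratio-of-expectations expressions appearing in the \fairnotion{} violation converge (uniformly in $k$) to the corresponding DP and EOp expressions. First I would record two pointwise facts about $s_\gamma$. As $\gamma \to 0$, $s_\gamma(\yv,\yv_{adv}) \to 1$ for every $\yv$, so $s_\gamma$ converges pointwise to the constant function $1$, which by Proposition \ref{thm:fairnotion-special} is exactly the case that yields DP. As $\gamma \to \infty$, $s_\gamma(\yv,\yv_{adv}) \to \ones(\mathrm{Jac}(\yv,\yv_{adv}) = 1) = \ones(\yv = \yv_{adv})$ (since the Jaccard score equals $1$ iff the index sets coincide, i.e. iff $\yv = \yv_{adv}$), and $s_\gamma \le 1$ uniformly; this is the case that yields EOp. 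So morally the claim is a continuity/limiting statement built on top of Proposition \ref{thm:fairnotion-special}.

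Next I would make this rigorous via dominated convergence. Since $0 < s_\gamma \le 1$ for all $\gamma \ge 0$ and $\tilde\yv \in [0,1]^L$ is bounded, the integrands $\tilde\yv\, s_\gamma(\yv,\yv_{adv})$, $s_\gamma(\yv,\yv_{adv})$, $\tilde\yv\,\ones(a=k)\,s_\gamma(\yv,\yv_{adv})$ and $\ones(a=k)\,s_\gamma(\yv,\yv_{adv})$ are all dominated by an integrable function ($\ones$ or a bounded constant), so each of the four expectations in eqn~\eqref{eq:fairnotion} converges — as $\gamma\to 0$ to the DP expectations $\E[\tilde\yv]$, $1$, $\E[\tilde\yv\ones(a=k)]$, $\E[\ones(a=k)]$, and as $\gamma\to\infty$ to the EOp expectations with $\ones(\yv=\yv_{adv})$ inserted. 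Because the denominators $\E[s_\gamma(\yv,\yv_{adv})]$ and $\E[\ones(a=k)s_\gamma(\yv,\yv_{adv})]$ stay bounded away from $0$ in both limits — in the $\gamma\to 0$ limit they tend to $1$ and $P(a=k)>0$; in the $\gamma\to\infty$ limit they tend to $P(\yv=\yv_{adv})>0$ and $P(a=k,\yv=\yv_{adv})>0$, assuming the advantaged group has positive mass in each subgroup — the ratios converge to the DP (resp. EOp) ratios, and hence each summand of the \fairnotion{} violation in eqn~\eqref{eq:fair-violation-k} converges to the corresponding summand of the DP (resp. EOp) violation. Since $\mathcal A$ is finite, the sum over $k=1,\dots,K$ of the $L_2$ norms also converges, which gives ``arbitrarily close.'' The first, stronger assertion (``violation of DP arbitrarily small for small $\gamma$'') then follows because if $h$ satisfies \fairnotion{} its \fairnotion{} violation is $0$, so its DP violation is arbitrarily close to $0$; symmetrically for EOp and large $\gamma$.

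The main obstacle is the denominator control: one must ensure the normalizing expectations do not vanish in the limit, which requires the mild nondegeneracy assumption that $P(a=k) > 0$ for each $k$ (for DP) and $P(a=k, \yv=\yv_{adv}) > 0$ for each $k$ (for EOp); without the latter the EOp violation is itself undefined, so it is natural to assume it. A secondary subtlety is that the convergence must be uniform over the finitely many $k$ and over the $L$ coordinates of $\tilde\yv$, but since both index sets are finite this is automatic once pointwise convergence of each scalar ratio is established. I would also remark that the monotonicity of $s_\gamma$ in $\gamma$ (it is decreasing in $\gamma$ whenever $\mathrm{Jac}(\yv,\yv_{adv})<1$) is not strictly needed for the limiting argument but could be used to give cleaner quantitative bounds on how small/large $\gamma$ must be for a prescribed tolerance $\epsilon$, if a quantitative version were desired.
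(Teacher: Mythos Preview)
Your proposal is correct and takes essentially the same approach as the paper: identify the pointwise limits of $s_\gamma$ as $\gamma\to 0$ and $\gamma\to\infty$, invoke the Dominated Convergence Theorem (using the uniform bound $s_\gamma\le 1$ and boundedness of $\tilde\yv$) to pass to the limit in each expectation, assume the denominators are bounded away from zero, and conclude that the ratio expressions --- hence the violation --- converge. The paper packages the final step with an explicit reverse-triangle/triangle inequality decomposition of $\|\ell_{s^*}-\ell_{s_\gamma}\|$, whereas you argue directly that each summand of the violation converges; both are equivalent, and your treatment of the nondegeneracy assumptions (requiring $P(a=k,\yv=\yv_{adv})>0$ for each $k$) is in fact slightly more careful than the paper's.
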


\begin{proof}
Let $\yv, \yv'$ be two arbitrary label vectors,
and $s^*(\yv, \yv')$ denote the limit of their similarity
$s_\gamma(\yv, \yv')$ as $\gamma \rightarrow 0$ or $\gamma \rightarrow \infty$.
Equivalently speaking, $s^*(\yv, \yv')$ is constant 1 or indicator function $\ones(\yv = \yv')$, 
which are special $s_\gamma$ by Proposition \ref{thm:fairnotion-special}. 
This allows us to express and upper bound the difference between violations of DP (or EOp) and {\fairnotion} on subgroup with $a=k$ and label $\yv_{adv}$ 
defined in eqn~\eqref{eq:fair-violation-k}
by 
{
\begin{align*}
&\| \ell_{s^*(\yv, \yv_{adv})} (f) - \ell_{s_\gamma(\yv, \yv_{adv})} (f)  \| \\
=& 
\Bigg\| \Big\| \frac{\E [\tilde{\yv} s^*(\yv, \yv_{adv})]}{\E [s^*(\yv, \yv_{adv})]} - \frac{\E [\tilde{\yv}  \ones(a=k) s^*(\yv, \yv_{adv})]}{\E [\ones(a=k) s^*(\yv, \yv_{adv})]} \Big\| - 
\Big\|
  \frac{\E [\tilde{\yv} s_\gamma( \yv, \yv_{adv})]}{\E [s_\gamma( \yv, \yv_{adv})]} - \frac{\E [\tilde{\yv}  \ones(a=k) s_\gamma( \yv, \yv_{adv})]}{\E [\ones(a=k) s_\gamma( \yv, \yv_{adv})]} \Big\| \Bigg\| \\
&\overset{(a)}{\leq}
\Bigg\| 
\left(\frac{\E [\tilde{\yv} s^*(\yv, \yv_{adv})]}{\E [s^*(\yv, \yv_{adv})]}  - \frac{\E [\tilde{\yv}  \ones(a=k) s^*(\yv, \yv_{adv})]}{\E [\ones(a=k) s^*(\yv, \yv_{adv})]} \right) - 
\left( \frac{\E [\tilde{\yv} s_\gamma( \yv, \yv_{adv})]}{\E [s_\gamma( \yv, \yv_{adv})]}
- \frac{\E [\tilde{\yv}  \ones(a=k) s_\gamma( \yv, \yv_{adv})]}{\E [\ones(a=k) s_\gamma( \yv, \yv_{adv})]} \right) \Bigg\| 
\\
&\overset{(b)}{\leq}
\Bigg\| \frac{\E [\tilde{\yv} s^*(\yv, \yv_{adv})]}{\E [s^*(\yv, \yv_{adv})]} - \frac{\E [\tilde{\yv} s_\gamma( \yv, \yv_{adv})]}{\E [s_\gamma( \yv, \yv_{adv})]} \Bigg\| +
\Bigg\| \frac{\E [\tilde{\yv}  \ones(a=k) s^*(\yv, \yv_{adv})]}{\E [\ones(a=k) s^*(\yv, \yv_{adv})]} - \frac{\E [\tilde{\yv}  \ones(a=k) s_\gamma( \yv, \yv_{adv})]}{\E [\ones(a=k) s_\gamma( \yv, \yv_{adv})]}\Bigg\|,
\label{eq:tri-ineq-thm2}
\end{align*}
}
where (a) holds from the reverse triangle inequality
and (b) holds from the triangle inequality. 

Next, sequence $s_\gamma(\yv, \yv')$ converges to $s^*(\yv, \yv')$ monotonically, we have the following \textit{almost surely} convergence
\begin{align*}
\tilde{\yv} s_\gamma(\yv, \yv_{adv}) \xrightarrow{a.s.} \tilde{\yv} s^*(\yv, \yv_{adv}).
\end{align*}


Recall that $\tilde{\yv}$ is the predicted probability vector so $\E [\| \Tilde{\yv} \| ] < \infty$, and $s_\gamma(\yv, \yv_{adv}), s^*(\yv, \yv_{adv}) \in [0, 1]$, we have 
\begin{align*}
&\max(
\| \tilde{\yv} s_\gamma(\yv, \yv_{adv}) \|,
\| \tilde{\yv} s^*(\yv, \yv_{adv}) \| )
\leq \| \tilde{\yv} \| 
\end{align*}

Let $\gamma^*$ denote 0 (for DP) or $\infty$ (for EOp). According to Dominated Convergence Theorem 
\begin{align*}
\lim_{\gamma \rightarrow \gamma^*} 
\| \E [\tilde{\yv} s_\gamma(\yv, \yv_{adv})] 
- \E [\tilde{\yv} s^*(\yv, \yv_{adv})] \| = 0,
\end{align*}

and 
\begin{align*}
\lim_{\gamma \rightarrow \gamma^*}  
\| \E [s_\gamma(\yv, \yv_{adv})] 
- \E [s^*(\yv, \yv_{adv})] \| = 0 .
\end{align*}

Without loss of generality we further assume $\E [s_\gamma(\yv, \yv_{adv})]$ and $\E [s^*(\yv, \yv_{adv})]$ are positive
\footnote{This only rules out case $p(\yv_{adv}) = 0$ when $\mathcal{Y}$ is discrete, where fairness concern does not exist.}. This gives us
\begin{align}
    \frac{\E [\tilde{\yv} s_\gamma(\yv, \yv_{adv})]}{\E [s_\gamma(\yv, \yv_{adv})]} \rightarrow \frac{\E [\tilde{\yv} s^*(\yv, \yv_{adv})]}{\E [s^*(\yv, \yv_{adv})]}.
\end{align}

In other words, the first term in eqn \eqref{eq:tri-ineq-thm2} can be bounded with arbitrarily small $\varepsilon$ as $\gamma \rightarrow 0 $ (for DP case with $s^*(\yv, \yv') = 1$)
or $\gamma \rightarrow \infty$ (for EOp case with $s^*(\yv, \yv') = \ones(\yv= \yv')$). 
Similar upper bound can be derived for the second term in the same way. 
Put together, we have 
\begin{align}
\| \ell_{s^*(\yv, \yv_{adv})} (f) - \ell_{s_\gamma(\yv, \yv_{adv})} (f)  \|
     \leq 2 \varepsilon,
\end{align}
where $\varepsilon$ depends on $\gamma$ and can be made arbitrarily small. 
The first part of the proposition is proved by further assuming $\ell_{s_\gamma}(\yv, \yv_{adv}) = 0$, i.e., 
\begin{align}
    \frac{\E [\tilde{\yv} s_\gamma(\yv, \yv_{adv})]}{\E [s_\gamma\yv, \yv_{adv})]}
    = \frac{\E [\tilde{\yv}  \ones(a=k) s_\gamma(\yv, \yv_{adv})]}{\E [\ones(a=k) s_\gamma(\yv, \yv_{adv})]}.
\end{align}

This completes our proof.

\end{proof}

\section{Algorithm}
\label{app:algorithm}

Here we provide a concise illustration of MPVAE traininig with {\fairnotion} regularizer.
In each step, we take one minibatch randomly selected from the dataset and update MPVAE with 
loss defined in eqn \eqref{eq:final-loss}. 
Algorithm \ref{algo:fairnotion} summarizes this step.
 
\begin{algorithm}[htb]
    \caption{One update of MPVAE with {\fairnotion} regularizer.}
    \label{algo:fairnotion}
     \textbf{Input}: 
        mini-batch
        $\{ (\xv^{(i)}, a^{(i)}, \yv^{(i)}) \}_{i=1}^n$,
        advantaged group $\yv_{adv}$,
        MPVAE $h$,
        hyperparameters $\gamma$ and $\lambda$.
        \\
    \textbf{Output}:
        Updated MPVAE $h$
    \begin{algorithmic}[1]
      \State Compute the empirical multi-label classification loss on the minibatch 
      \begin{align*}
          \hat \ell_{mlc} = \frac{1}{n} \sum_{i=1}^n \ell_{mlc}(\xv^{(i)}, \yv^{(i)}; h).
      \end{align*}
        \State For each sample, compute predictions from label and feature branches $\tilde \yv^{(i)}_{\yv}$, $\tilde \yv^{(i)}_{\xv}$, and $s^{(i)}_\gamma = s_\gamma(\yv^{(i)}, \yv_{adv})$.
        \State Compute empirical fairness loss $\hat \ell_{s_\gamma (\yv, \yv_{adv})}$ on the minibatch with eqn \eqref{eq:fair-violation-k} or \eqref{eq:fair-violation-k2}. Estimate each term by eqn \eqref{eq:fairnotion-est-all} or \eqref{eq:fairnotion-est-sub}.
        \State Take one updates on $h$ with Adam \cite{kingma2014adam} to minimize the final empirical loss
        \begin{align*}
            \hat \ell = \hat \ell_{mlc} + \lambda \hat \ell_{s_\gamma (\yv, \yv_{adv})}
        \end{align*}
    \end{algorithmic}
\end{algorithm}

\section{Experimental Details}
\label{app:hyperparameter}

Here we present the hyperparameters we used in experiments for reproducibility. 
For experiments run 10 replications, we used random seeds from 1 to 10;
for experiments that only had 3 replications, we used seeds from 1 to 3. 
We used the same fixed hyperparameters except $\lambda$ and $\gamma$ throughout all experiments. 
We made one modification to MPVAE training by clipping the gradient norm to stabilize the training, 
other training strategies are adopted from \citet{bai2020mpvae} and details can be found therein. 
Hyperparameters that are different from \citet{bai2020mpvae} are listed in table \ref{tab:hyperparameter}.


\begin{table}[htb]
    \centering
    \begin{tabular}{llll}
    \toprule
    Epochs & Ranking loss coefficient & Latent dimension & Max. gradient norm \\
    \midrule
    20     & 100                      & 32               & 5 \\
    \bottomrule
    \end{tabular}
    \caption{
    Hyperparameter settings of our experiments. 
    Other hyperparameters are adopted from \citet{bai2020mpvae}.
    }
    \label{tab:hyperparameter}
\end{table}

\clearpage
\section{More Experimental Results}
\subsection{Fairness-Accuracy Tradeoff}
\label{app:tradeoffs}

In this section we show full EOp- and DP-accuracy tradeoff in Figure \ref{fig:fairnotion-tradeoff-eop} and \ref{fig:fairnotion-tradeoff-dp} following the same logic as in Section \ref{sec:eop-accuracy}.
Note that whereas EOp-accuracy tradeoff on Adult dataset has different curvatures, but the conclusion does not change. 
DP-accuracy tradeoff has the similar trends as the EOp-accuracy tradeoff, so we omit reiterating observations.

\begin{figure*}[htb]
    \renewcommand{\t}[1]{
        $s_{\scalebox{0.8}{#1}}$
    }
    
    \def\figwidth{0.5\linewidth}
    \centering
    \resizebox{\linewidth}{!}{
    \centering
    \begin{subfigure}[b]{\figwidth}
        \centering
        \includesvg[pretex=\fontsize{3.5}{8}\selectfont,width=\linewidth]{figures/adult-y1-eop.svg}
        \caption{Adult dataset: No.1 label group}
    \end{subfigure}
    \quad
    \begin{subfigure}[b]{\figwidth}
        \centering
        \includesvg[pretex=\fontsize{3.5}{8}\selectfont,width=\linewidth]{figures/adult-y18-eop.svg}
      \caption{Adult dataset: No.18 label group}
    \end{subfigure}
    }
    \\
    \resizebox{\linewidth}{!}{
    \centering
    \begin{subfigure}[b]{\figwidth}%
        \centering%
        \includesvg[pretex=\fontsize{3.5}{8}\selectfont,width=\linewidth]{figures/credit-y1-eop.svg}
        \caption{Credit dataset: No.1 label group}
    \end{subfigure}
    \quad
    \begin{subfigure}[b]{\figwidth}
        \centering
        \includesvg[pretex=\fontsize{3.5}{8}\selectfont,width=\linewidth]{figures/credit-y9-eop.svg}
        \caption{Credit dataset: No.9 label group}
    \end{subfigure}
    }
    \caption{
    EOp-accuracy tradeoffs. EOp regularizer was unstable and ineffective when the advantaged group is small,
    {\fairnotion}, on the other hand, preserved similar tradeoff trend as DP on both large and small label groups.
    }
 \label{fig:fairnotion-tradeoff-eop}
\end{figure*}

\begin{figure*}[htb]
    \renewcommand{\t}[1]{
        $s_{\scalebox{0.8}{#1}}$
    }
    
    \def\figwidth{0.5\linewidth}
    \centering
    \resizebox{\linewidth}{!}{
    \centering
    \begin{subfigure}[b]{\figwidth}
        \centering
        \includesvg[pretex=\fontsize{3.5}{8}\selectfont,width=\linewidth]{figures/adult-y1-dp.svg}
        \caption{Adult dataset: No.1 label group}
    \end{subfigure}
    \quad
    \begin{subfigure}[b]{\figwidth}
        \centering
        \includesvg[pretex=\fontsize{3.5}{8}\selectfont,width=\linewidth]{figures/adult-y18-dp.svg}
      \caption{Adult dataset: No.18 label group}
    \end{subfigure}
    }
    \\
    \resizebox{\linewidth}{!}{
    \centering
    \begin{subfigure}[b]{\figwidth}%
        \centering%
        \includesvg[pretex=\fontsize{3.5}{8}\selectfont,width=\linewidth]{figures/credit-y1-dp.svg}
        \caption{Credit dataset: No.1 label group}
    \end{subfigure}
    \quad
    \begin{subfigure}[b]{\figwidth}
        \centering
        \includesvg[pretex=\fontsize{3.5}{8}\selectfont,width=\linewidth]{figures/credit-y9-dp.svg}
        \caption{Credit dataset: No.9 label group}
    \end{subfigure}
    }
    \caption{
    DP-accuracy tradeoffs. EOp regularizer was unstable and ineffective when the advantaged group is small,
    {\fairnotion}, on the other hand, preserved similar tradeoff trend as DP on both large and small label groups.
    }
 \label{fig:fairnotion-tradeoff-dp}
\end{figure*}

\subsection{Experiments on Multiclass Sensitive Feature}
\label{app:multi-class}

In this section we report results of {\fairnotion} on 
multiclass sensitive feature, where corresponding regularizers are constructed based on eqn \eqref{eq:fair-violation-k}. 
 We take Adult dataset as an example and use \textit{race} as the sensitive feature. All other experiment settings are same as before. 
 
Table \ref{tab:fairnotion-perform-multi-class} reports corresponding DP and EOp violations using different regularizers with coefficient $\lambda = 10$. Again, {\fairnotion} achieves the lowest DP and EOp violations as before. 
Figure \ref{fig:fairnotion-tradeoff-multi-class} shows corresponding fairness-accuracy tradeoffs, where {\fairnotion}
strikes a good tradeoff balances. 

\begin{table}[htb]
    \centering
    \begin{tabular}{rrr ccc ccc}
    \toprule
    & $|\yv_{adv}|$ & Metric & DP reg & $ s_{ 1 } $-SF reg & $ s_{5} $-SF reg & $ s_{10} $-SF reg & EOp reg & No reg\\
    \cmidrule(l){2-9}
    {\multirow{4}{*}{\rotatebox[origin=c]{90}{Adult}}} 
    & {\multirow{2}{*}{{No.1}}}   & DP  & 0.009 & 0.009 & \textbf{0.008} & \textbf{0.008}  & 0.051  & 0.110  \\
    &                             & EOp & 0.013 & 0.014 & 0.009 & \textbf{0.005}  & 0.041  & 0.160  \\
    \cmidrule(l){3-9}
    & {\multirow{2}{*}{{No.18}}}  & DP  & 0.009 & 0.010 & 0.004 & \textbf{0.002} & 0.105 & 0.110  \\
    &                             & EOp & 0.023 & 0.022 & 0.009 & \textbf{0.006} & 0.090 & 0.101  \\
    \bottomrule
    \end{tabular}
    \caption{
    DP and EOp violations of MPVAE trained with DP, EOp, and {\fairnotion} regularziers. 
    Sensitive feature is multiclass \textit{race}. 
    A large and a small advantaged groups (measured by their ranking in col. $|\yv_{adv}|$) are tested. 
    Results are averaged over 10 replications, best results are in bold. 
    }
    \label{tab:fairnotion-perform-multi-class}
\end{table}

\begin{figure*}[htb]
    \renewcommand{\t}[1]{
        $s_{\scalebox{0.8}{#1}}$
    }
    
    \def\figwidth{0.5\linewidth}
    \centering
    \resizebox{\linewidth}{!}{
    \centering
    \begin{subfigure}[b]{\figwidth}
        \centering
        \includesvg[pretex=\fontsize{3.5}{8}\selectfont,width=\linewidth]{figures/adult-y1-dp-race.svg}
        \caption{DP-accuracy tradeoff on Adult dataset: No.1 label group}
    \end{subfigure}
    \quad
    \begin{subfigure}[b]{\figwidth}
        \centering
        \includesvg[pretex=\fontsize{3.5}{8}\selectfont,width=\linewidth]{figures/adult-y18-dp-race.svg}
        \caption{DP-accuracy tradeoff on Adult dataset: No.18 label group}
    \end{subfigure}
    }
    \\
    \resizebox{\linewidth}{!}{
    \centering
    \begin{subfigure}[b]{\figwidth}%
        \centering%
        \includesvg[pretex=\fontsize{3.5}{8}\selectfont,width=\linewidth]{figures/adult-y1-eop-race.svg}
        \caption{EOp-accuracy tradeoff on Adult dataset: No.1 label group}
    \end{subfigure}
    \quad
    \begin{subfigure}[b]{\figwidth}
        \centering
        \includesvg[pretex=\fontsize{3.5}{8}\selectfont,width=\linewidth]{figures/adult-y18-eop-race.svg}
        \caption{EOp-accuracy tradeoff on Adult dataset: No.18 label group}
    \end{subfigure}
    }
    \caption{
    DP- and EOp-accuracy tradeoffs. 
    Sensitive feature is multiclass \textit{race}. 
    }
 \label{fig:fairnotion-tradeoff-multi-class}
\end{figure*}

\end{document}